\renewcommand*\backref[1]{\ifx#1\relax \else (Cited on #1) \fi}
\newcommand{\comm}[1]{}
\newcommand{\cX}{\mathcal{X}}
\newcommand{\cO}{\mathcal{O}}
\newcommand{\cH}{\mathcal{H}}
\newcommand{\cD}{\mathcal{D}}
\newcommand{\cS}{\mathcal{S}}
\newcommand{\cU}{\mathcal{U}}
\newcommand{\cB}{\mathcal{B}}
\newcommand{\cL}{\mathcal{L}}
\newcommand{\R}{\mathbb{R}}
\newcommand{\DUETS}{\textsc{DUETS }}
\DeclareMathOperator*{\argmax}{arg\,max}
\DeclareMathOperator*{\argmin}{arg\,min}
\newtheorem{theorem}{Theorem}[section]
\newtheorem{lemma}[theorem]{Lemma}
\theoremstyle{definition}
\newtheorem{definition}[theorem]{Definition}
\newtheorem{assumption}[theorem]{Assumption}
\theoremstyle{remark}
\title{Order-Optimal Regret in Distributed Kernel Bandits using Uniform Sampling with Shared Randomness}
\author[*]{Nikola Pavlovic}
\author[$\ddagger$]{Sudeep Salgia}
\author[*]{Qing Zhao}
\affil[*]{School of Electrical \& Computer Engineering, Cornell University, Ithaca, NY, \emph{\{np358, qz16\}@cornell.edu} }
\affil[$\ddagger$]{Carnegie Mellon University, Pittsburgh, PA, \emph{ssalgia@andrew.cmu.edu}}
\date{Feb 2024}
\begin{document}

\maketitle

\begin{abstract}
   We consider distributed kernel bandits where $N$ agents aim to collaboratively maximize an unknown reward function that lies in a reproducing kernel Hilbert space. Each agent sequentially queries the function to obtain noisy observations at the query points. Agents can share information through a central server, with the objective of minimizing regret that is accumulating over time $T$ and aggregating over agents. We develop the first algorithm that achieves the optimal regret order (as defined by centralized learning) with a communication cost that is sublinear in both $N$ and $T$. The key features of the proposed algorithm are the uniform exploration at the local agents and shared randomness with the central server. Working together with the sparse approximation of the GP model, these two key components make it possible to preserve the learning rate of the centralized setting at a diminishing rate of communication.  
\end{abstract}

\section{Introduction}

\subsection{Distributed Kernel Bandits}

We study the problem of zeroth-order online stochastic optimization in a distributed setting, where $N$ agents aim to collaboratively maximize a reward function with communications facilitated by a central server. The reward function $f:\cX \rightarrow \mathbb{R}$ is unknown; it is only known that it lives in a Reproducing Kernel Hilbert Space (RKHS) associated with a known kernel $k$. Each agent sequentially chooses points in the function domain $\cX$ to query and subsequently receives noisy feedback on the function values (i.e., random rewards) at the query points. The goal is for each distributed agent to converge quickly to $x^{*} \in \argmax_{x \in \cX} f(x)$, a global maximizer of $f$. We quantify this goal as minimizing the cumulative regret summed over a learning horizon of length $T$ and over all $N$ agents:
\begin{align}
R=\sum_{n=1}^{N}\sum_{t=1}^{T} \left(f(x^*)-f(x^{(n)}_{t})\right),
\label{eqn:regret_definition}
\end{align}
where $x^{(n)}_{t}$ denotes the point queried by agent $n$ at time $t$.  \\

The above zeroth-order stochastic optimization problem can be viewed as a continuum-armed kernelized-bandit problem~\citep{Srinivas}. The expressive power of the RKHS model represents a broad family of objective functions. In particular, it is known that the RKHS of typical kernels, such as the
Mat\'ern family of kernels, can approximate almost all continuous functions on compact subsets of $\R^d$~\citep{Srinivas}. The problem has been studied extensively under a centralized setting with a single decision maker (i.e., $N=1$), for which several algorithms have been proposed, including  UCB-based algorithms~\citep{Srinivas, Gopalan,yadkori_Linear}, batched pure exploration~\citep{Batched_Communication}, tree-based domain shrinking~\citep{GP_ThreDS} and RIPS~\citep{Camilleri2021RIPS}. Optimal learning efficiency in terms of regret order in $T$ has been obtained in both the stochastic~\citep{Batched_Communication, GP_ThreDS} and the contextual setting~\citep{Valko_Kernel}. \\

In addition to learning efficiency, distributed kernel bandits face a new challenge of communication efficiency. Without constraints on the communication overhead, all agents can share their local observations and coordinate their individual query actions at no cost. The distributed problem can be trivially reduced to a centralized one. At the other end of the spectrum is a complete decoupling of the agents, resulting in $N$ independent single-user problems without the benefit of data sharing for accelerated learning. The tension between learning efficiency (which demands data sharing and action coordination) and communication efficiency is evident. A central question to this trade-off is how to achieve the optimal learning rate enjoyed by the centralized setting using a minimum amount of message exchange among agents. \\

In contrast to the extensive literature on centralized kernel bandits, distributed kernel bandits are much less explored despite their broad applications (e.g., federated learning for hyperparameter tuning~\citep{Dai2020FBO} and collaborative training of neural nets using the recent theory of Neural Tangent Kernel~\citep{jacot2018neural}). There exist only a handful of studies under drastically different settings and constraints (see Sec.\ref{sec:Related_Work}). For the setting considered in this work, no distributed learning algorithms exist that achieve the optimal regret order with a sublinear (in both $T$ and $N$) message exchange among agents.

\subsection{Main Results}

In this paper, we develop the first algorithm for distributed kernel bandits that achieves the optimal order of regret enjoyed by centralized learning with a sublinear message exchange in both $T$ and $N$. \\

To tackle the essential tradeoff between learning rate and communication efficiency, a distributed learning algorithm needs a communication strategy that governs \emph{what} to communicate and \emph{how to integrate} the shared information into local query actions. To minimize the total regret that is accumulating over time and aggregating over the agents, the communication strategy needs to work in tandem with the query actions to ensure a continual flow of information available at all agents for decision-making.\\

A natural answer to \emph{what} to communicate in a distributed learning problem is certain sufficient local statistics of the underlying unknown parameters. For example, for multi-armed (i.e., discrete arms) and linear bandits, this corresponds to the local estimates of the arm mean values and the mean reward vector respectively. However, for kernel bandits, the corresponding quantity would be an estimate of the function, which is potentially infinite-dimensional and hence an impractical choice for communication. Existing studies resolve this issue by exchanging local query actions and observations across all agents and throughout the learning horizon~\citep{Approx_Dis, Dubey_Kernel_Distributed}, resulting in a communication cost growing linearly in both $N$ and $T$. \\

Even with a communication cost growing linearly in both $N$ and $T$, preserving  the full learning power of a centralized decision maker with $NT$ query points is not immediate. The prevailing approaches to centralized kernel bandits that achieve order optimal regret build on the maximum posterior variance (MPV) sampling strategy~\citep{Batched_Communication} which queries, at each time, the point with the highest posterior variance conditioned on all past observations. Ensuring such a maximal uncertainty reduction at each query point is believed to be crucial in utilizing the full statistical power of all query points. Unfortunately, such a fully adaptive query strategy is incompatible with the parallel learning among distributed agents. To emulate the MPV sampling at each of the $NT$ query points would require the agents to \emph{take turns} in their queries and share the local observations immediately with all other agents, an infeasible strategy for most distributed learning problems.  Implementing MPV-based sampling in parallel across agents, however, loses the full adaptivity. This is arguably the main obstacle in realizing the optimal learning rate of a centralized kernel bandit in a distributed setting.  \\

To tackle the above challenges, our proposed algorithm represents major departures from the prevailing approaches. Referred to as \textsc{DUETS} (Distributed Uniform Exploration of Trimmed Sets), this algorithm has two key features: \emph{uniform exploration} at the local agents and \emph{shared randomness} with the central server.  \\

In \DUETS, each agent employs uniform (at random) sampling as the query strategy. Uniform sampling is fully compatible with parallel learning. In particular, note that the union of the local sets of size $t$ query points obtained at the agents through uniform sampling is identical (in distribution) to the set of size $Nt$ query points obtained at a centralized decision maker using the same uniform sampling strategy. This superposition property of uniform sampling allows us to leverage the recent results on random exploration in centralized kernel bandits~\citep{Sudeep_Uniform_Sampling}, and is crucial in achieving the optimal learning rate defined by the centralized setting. In addition to preserving the learning rate of the centralized setting, uniform sampling enjoys advantages in computation as well as communication aspects. Comparing with the MPV strategy that requires an expensive maximization of a non-convex acquisition function for finding each query point, uniform sampling is extremely simple to implement. This computational efficiency can be particularly attractive to distributed local devices. In terms of communication efficiency, uniform sampling makes it possible to bypass the exchange of query points altogether and reduce the exchange of reward observations through the \emph{shared randomness} strategy detailed below.  \\

In DUETS, each agent has access to an independent coin, i.e., a source of randomness, which is unknown to the other agents but is known to the server. The shared randomness  enables the server to reproduce the points queried by the agents, thereby resulting in effective transmission of the local set of queried points at each agent to the server at \emph{no communication cost}. To reduce the communication overhead associated with the reward observations, we employ sparse approximation of GP models~\citep{GP_RKHS_Approx_Equiv}. The availability of \emph{all} the queried points at the server provides the perfect platform for leveraging the power of sparse approximation to reduce the communication to a diminishing fraction of the total number of observations. Specifically, the server, with access to all the query points, selects a small subset of points that can approximate, to sufficient accuracy, the posterior statistics corresponding to all the points queried by the agents. This allows a diminishing rate of communication to share local reward observations. It is this integration of uniform sampling, shared randomness, and sparse approximation in DUETS that makes it possible to achieve the optimal learning rate of the centralized setting at a communication cost that is sublinear in both $N$ and $T$.  \\

We analyze the performance of \DUETS and establish that it incurs a cumulative regret of $\widetilde{\mathcal{O}}(\sqrt{NT\gamma_{NT}}\log(T/\delta))$\footnote{The notation $\tilde{\cO}(\cdot)$ hides poly-logarithmic factors.} with probability $1 - \delta$, where $\gamma_{NT}$ denotes the maximal information gain of the kernel and represents the effective dimension of the kernel. Note that this matches the lower bound (up to logarithmic factors) for any centralized algorithm with a total of $NT$ queries as established in~\citet{Scarlett2017LowerBoundGP}, thereby establishing the order-optimality of the proposed algorithm. To the best knowledge of the authors, this is the \emph{first} algorithm to achieve the optimal order of regret for the problem of distributed kernel bandits. We also establish a bound of $\tilde{\cO}(\gamma_{NT})$ on the communication cost incurred by \DUETS, where communication cost is measured by the number of real numbers transmitted during the algorithm (See Section~\ref{sec:problem_formulation} for more details). This significantly improves over the state-of-the-art of $\cO(N\gamma_{NT}^3)$ achieved by ApproxDisKernelUCB algorithm proposed by~\citet{Approx_Dis} and is always guaranteed to sublinear in the total number of queries, $NT$.

\subsection{Related Work}
\label{sec:Related_Work}

The existing literature on distributed kernel bandits is relatively slim. The most relevant to our work is that by~\citet{Approx_Dis}, where the authors consider the problem of distributed contextual kernel bandits and propose a UCB based policy with sparse approximation of GP models and intermittent communication. Their proposed policy was shown to incur a cumulative regret of $\widetilde{\mathcal{O}}(\sqrt{NT}\gamma_{NT})$ and communication cost of $\mathcal{O}(N\gamma_{NT}^3)$. The \DUETS algorithm proposed in this work, offers an improvement over the algorithm in~\citet{Approx_Dis} both in terms of regret and communication cost. While the contextual setting with varying arm action sets considered in their work is more general that the setting with a fixed arm set considered in this work, their proposed algorithm does not offer non-trivial reduction in regret or communication cost in the fixed arm setting. Moreover, both the regret and communication cost incurred by the algorithm in~\citet{Approx_Dis} are not guaranteed to be sublinear in the total number of queries, $NT$, for all kernels. Consequently, their algorithm does not guarantee convergence to $x^*$ or a non-trivial communication cost for all kernels. On the other hand, both regret and communication cost of \DUETS is guaranteed to be sub-linear implying both convergence and communication efficiency. \\

Among other studies,~\citet{Distributed_Simple_Regret} consider the problem of distributed pure exploration in kernel bandits over finite action set, where they focus on designing learning strategies with low simple regret. In this work, we consider the more challenging continuum-armed setup with a focus on minimizing cumulative regret as opposed to simple regret. Another line of work explores impact of heterogeneity among clients and design algorithms to minimize this impact.~\citet{Vakili_Suddep_Sketching} consider personalized kernel bandits in which agents have heterogeneous models and aim to optimize the weighted sum of their own reward function and the average reward function over all the agents.~\citet{Dubey_Kernel_Distributed} consider heterogeneous distributed kernel bandits over a graph in which they use additional kernel-based modeling to measure task similarity across different agents.  \\

In contrast to the distributed kernel bandit, the problems of distributed multi-armed bandits and linear bandits have been extensively studied. For distributed multi-armed bandits (MAB), a variety of algorithms have been proposed for distributed learning under different network topologies~\citep{Landgren2017, Shahrampour2017, Sankararaman2019, Chawla2020gossip, Zhu2021}.~\citet{Shi2021} and~\citet{Fed_MAB} have analyzed the impact of heterogeneity among agents in the distributed MAB problem. Similarly, the problem of distributed linear bandits is also well-understood in variety of settings with different network topologies~\citep{Korda2016}, heterogeneity among agents~\citep{Mitra2021, Ghosh2021, Hanna2022ContextualLinearBandits} and communication constraints~\citep{Mitra2022bitconstrained, WANG_Delb_Demab, Huang2021, Amani2022, Sudeep_Linear}.

\section{Problem Formulation}\label{sec:problem_formulation}

We consider a distributed learning framework consisting of $N$ agents indexed by $\{1,2,\dots, N\}$. Under this framework, we study the problem of collaboratively maximizing an unknown function $f: \mathcal{X} \to \R$, where $\cX \subset \R^d$ is a compact, convex set. The function $f$ belongs to the Reproducing Kernel Hilbert Space (RKHS), $\cH_k$, associated with a known positive definite kernel $k : \cX \times \cX \to \R$. The RKHS, $\cH_k$, is a Hilbert space that is endowed by with an inner product $\langle \cdot, \cdot \rangle_{\cH_k}$ that obeys the reproducing property, i.e., $\langle g, k(x, \cdot) \rangle_{\cH_k} = g(x)$ for all $g \in \cH_k$, and induces the norm $\|g\|_{\cH_k} = \langle g, g \rangle_{\cH_k}$. \\

The agents can access the unknown function by querying the function at different points in the domain $\cX$. Upon querying a point $x \in \cX$, the agent receives a reward $y = f(x) + \epsilon$, where $\epsilon$ is a noise term. We make the following assumptions on the unknown function $f$ and noise.

\begin{assumption}\label{bounded_RKHS_norm}
 The RKHS norm of the function $f$ is bounded by a known constant $B$, i.e., $\|f\|_{\cH_k} \leq B$.
\end{assumption}

\begin{assumption}\label{Sub_Gaussian}
The noise term $\epsilon$ is assumed to be independent across all agents and all queries and is a zero-mean, $R$ sub-Gaussian random variable i.e.,  it satisfies the relation $\mathbb{E}[\exp(\lambda \epsilon)]\leq \exp{\frac{\lambda^2R^2}{2}}$ for all $\lambda \in \mathbb{R}$.
\end{assumption}

\begin{assumption}\label{Grid}
For each $r \in \mathbb{N}$, there exists a discretization $\cU_r$ of $\mathcal{X}$ with $|\cU_r| = \mathrm{poly}(r)$\footnote{The notation $g(x) = \mathrm{poly}(x)$ is equivalent to $g(x) = \cO(x^k)$ for some $k \in \mathbb{N}$.} such that, for any $f \in \mathcal{H}_k$, we have $|f(x)-f([x]_{\cU_r})|\leq \frac{\|f\|_{\mathcal{H}_k}}{r}$, where $[x]_{\cU_r} =\argmin_{x'\in \cU_r} \|x-x'\|_2$. 
\end{assumption}

\begin{assumption}
    Let $\cL_{\eta} = \{x \in \cX | f(x) \geq \eta \}$ denote the level set of $f$ for $\eta \in [-B, B]$. We assume that for all $\eta \in [-B,B]$, $\cL_{\eta}$ is a disjoint union of at most $M_f < \infty$ components, each of which is closed and connected. Moreover, for each such component, there exists a bi-Lipschitzian map between each such component and $\cX$ with normalized Lipschitz constant pair $L_f, L_f' < \infty$.
    \label{assumption:f_level_set_regularity}
\end{assumption}

Assumptions~\ref{bounded_RKHS_norm}-\ref{Grid} are standard, mild assumptions that are commonly adopted in the literature~\citep{Srinivas, Gopalan, Batched_Communication, Vakili_Aprox_Conv, Vakili_Kernel_Simple_Regret}. The existence of the discretization $\cU_r$ in Assumption~\ref{Grid} has been justified and adopted in previous studies~\citep{Srinivas, Vakili_Kernel_Simple_Regret}. In particular, the popular class of kernels like Squared Exponential and Mat\'ern kernels are known to be Lipschitz continuous, in which case a $\varepsilon$-cover of the domain with $\varepsilon = \cO(1/r)$ is sufficient to show the existence of such a discretization. At a high level, Assumption~\ref{assumption:f_level_set_regularity} ensures that the structure of the levels sets of $f$ satisfy a mild regularity condition. This is a mild assumption on $f$ that we require to adopt a result from~\citet{Sudeep_Uniform_Sampling} for our analysis. \\

The agents collaborate with each other by communicating through a central server. At each time instant, each agent can send a message to the server through the uplink channel. Based on the messages from different agents received by the server, it can then broadcast a message back to all the agents through the downlink channel. \\

Our objective is to design a distributed learning policy $\pi$ that specifies for each agent $n$, the point $x_{t}^{(n)}$ to be queried at each time instant $t$, based on the information available at that agent upto time instant $t$. The performance of a collaborative learning policy $\pi$ is measured through its performance in terms of both learning and communication efficiency over a learning horizon of $T$ steps. The learning efficiency is measured using the notion of cumulative regret, as defined in~\eqref{eqn:regret_definition}. \\

The communication efficiency is measured using the sum of the uplink and downlink communication costs. In particular, let $C_{\mathrm{up}}^{(n)}(T)$ denote the number of real numbers sent by the agent $n$ to the server over the time horizon. The uplink cost of $\pi$, $C_{\mathrm{up}}^{\pi}(T)$ is then given as the average communication cost over all agents:
\begin{align}
    C_{\mathrm{up}}^{\pi}(T) = \frac{1}{N}\sum_{n = 1}^N C_{\mathrm{up}}^{(n)}(T).
\end{align}
Similarly, the downlink cost of $\pi$, $C_{\mathrm{down}}^{\pi}(T)$ is given as the number of real numbers broadcast by the server over the entire time horizon averaged over all agents . The overall communication cost of $\pi$, $C^{\pi}(T)$, is given as $C^{\pi}(T) =  C_{\mathrm{up}}^{\pi}(T) + C_{\mathrm{down}}^{\pi}(T)$. \\

The objective is to design a distributed learning policy that achieves the order-optimal cumulative regret and incurs a low communication cost. We aim to provide high probability bounds on both the cumulative regret and communication cost that hold with probability $1-\delta$ for any given $\delta \in (0,1)$. \\

We overview the basis of Gaussian Process models and their sparse approximation, both of which are central to our proposed policy.

\subsection{ GP Models}

In this section we present a brief overview of Gaussian Process models and their application on establishing confidence interval for RKHS elements. \\

A Gaussian Process (GP) is a random process $G$ indexed by $\cX$ and is associated with a mean function $\mu : \cX \to \R$ and a positive definite kernel $k : \cX \times \cX \to \R$. The random process $G$ is defined such that for all finite subsets of $\cX$,  $\{x_1, x_2, \dots, x_m\} \subset \cX$, $m \in \mathbb{N}$, the random vector $[G(x_1), G(x_2), \dots, G(x_m)]^{\top}$ follows a multivariate Gaussian distribution with mean vector $[\mu(x_1), \dots, \mu(x_n)]]^{\top}$ and covariance matrix $\Sigma = [k(x_i, x_j)]_{i,j=1}^m$. Throughout the work, we consider GPs with $\mu \equiv 0$. When used as a prior for a data generating process under Gaussian noise, the conjugate property provides closed form expressions for the posterior mean and covariance of the GP model. Specifically, given a set of observations $\{\mathbf{X}_m,\mathbf{Y}_m\} = \{(x_i,y_i)\}_{i=1}^m$ from the underlying process, the expression for posterior mean and variance of GP model is given as follows:
\begin{align}
    \mu_{m}(x) & =k_{\mathbf{X}_m}(x)^{\top}(\lambda \mathbf{I}_m+\mathbf{K}_{\mathbf{X}_m,\mathbf{X}_m})^{-1}\mathbf{Y}_m, \label{eqn:posterior_mean}\\
    \sigma^2_m(x)& =(k(x,x)- k_{\mathbf{X}_m}^{\top}(x)(\lambda \mathbf{I}_m+\mathbf{K}_{\mathbf{X}_m,\mathbf{X}_m})^{-1}k_{\mathbf{X}_m}(x)). \label{eqn:posterior_variance}
\end{align}
In the above expressions, $k_{\mathbf{X}_m}(x)=[k(x_1,x),k(x_2,x)\dots k(x_n,x)]^{\top}$, $\mathbf{K}_{\mathbf{X}_m,\mathbf{X}_m}=\{k(x_i,x_j)\}_{i,j=1}^{m}$, $\mathbf{I}_m$ is the $m \times m$ identity matrix and $\lambda$ corresponds to the variance of the Gaussian noise. \\

Following a standard approach in the literature~\citep{Srinivas}, we model the data corresponding to observations from the unknown $f$, which belongs to the RKHS of a positive definite kernel $k$, using a GP with the same covariance kernel $k$. In particular, we assume a \emph{fictitious} GP prior over the fixed, unknown function $f$ along with \emph{fictitious} Gaussian distribution for the noise. The benefit of this approach is that the posterior mean and variance of this GP model serve as tools to both predict the values of the function $f$ and quantify the uncertainty of the prediction at unseen points in the domain, as shown by the following lemma .
\begin{lemma}{~\citet[Thm. 1]{Vakili_Kernel_Simple_Regret}}
Assume that~\ref{bounded_RKHS_norm} and~\ref{Sub_Gaussian} hold. Given a set of observations $\{\mathbf{X}_m,\mathbf{Y}_m\}$ as described above, such that the query points $\mathbf{X}_m$ are chosen independent of the noise sequence, then for a fixed $x\in \mathcal{X}$, the following relation holds with probability at least $1- \delta$:
\[
|h(x)-\mu_m(x)|\leq \beta(\delta) \cdot \sigma_m(x),
\]
where $\beta(\delta) =B+R\sqrt{\frac{2}{\lambda}\log{\left(\frac{2}{\delta}\right)}}$.
\end{lemma}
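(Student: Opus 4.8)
\emph{Proof idea.} The plan is to decompose $f(x)-\mu_m(x)$ into a deterministic approximation error and a stochastic noise term, and to bound each separately by a multiple of $\sigma_m(x)$. Writing the observation vector as $\mathbf{Y}_m = \mathbf{f}_m + \boldsymbol{\epsilon}_m$, where $\mathbf{f}_m = [f(x_1),\dots,f(x_m)]^{\top}$ and $\boldsymbol{\epsilon}_m$ is the stacked noise, and setting $v := (\lambda \mathbf{I}_m + \mathbf{K}_{\mathbf{X}_m,\mathbf{X}_m})^{-1}k_{\mathbf{X}_m}(x)$, formula~\eqref{eqn:posterior_mean} gives $\mu_m(x) = v^{\top}\mathbf{f}_m + v^{\top}\boldsymbol{\epsilon}_m$, so that
\[
f(x) - \mu_m(x) \;=\; \big(f(x) - v^{\top}\mathbf{f}_m\big) \;-\; v^{\top}\boldsymbol{\epsilon}_m .
\]

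For the first term I would invoke the reproducing property to write $f(x) - v^{\top}\mathbf{f}_m = \big\langle f,\ k(x,\cdot) - \sum_{i=1}^m v_i\, k(x_i,\cdot)\big\rangle_{\cH_k}$, then apply Cauchy--Schwarz together with $\|f\|_{\cH_k}\le B$ from Assumption~\ref{bounded_RKHS_norm}. The computational heart of the proof is the identity
\[
\Big\| k(x,\cdot) - \sum_{i=1}^m v_i\, k(x_i,\cdot)\Big\|_{\cH_k}^2
\;=\; k(x,x) - k_{\mathbf{X}_m}(x)^{\top}(\lambda\mathbf{I}_m+\mathbf{K}_{\mathbf{X}_m,\mathbf{X}_m})^{-1}k_{\mathbf{X}_m}(x) - \lambda\|v\|_2^2
\;=\; \sigma_m^2(x) - \lambda\|v\|_2^2,
\]
obtained by expanding the norm as $k(x,x) - 2v^{\top}k_{\mathbf{X}_m}(x) + v^{\top}\mathbf{K}_{\mathbf{X}_m,\mathbf{X}_m}v$ and simplifying with $\mathbf{K}_{\mathbf{X}_m,\mathbf{X}_m} = (\lambda\mathbf{I}_m + \mathbf{K}_{\mathbf{X}_m,\mathbf{X}_m}) - \lambda\mathbf{I}_m$. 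This single identity yields two things at once: the approximation error is at most $B\,\sigma_m(x)$ in magnitude, and, because its left-hand side is nonnegative, $\|v\|_2 \le \sigma_m(x)/\sqrt{\lambda}$.

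For the noise term, I would use the hypothesis that $\mathbf{X}_m$ is chosen independently of the noise: conditioned on $\mathbf{X}_m$, the vector $v$ is deterministic, so by Assumption~\ref{Sub_Gaussian} the sum $v^{\top}\boldsymbol{\epsilon}_m$ is a zero-mean $R\|v\|_2$-sub-Gaussian random variable, and a standard Chernoff bound gives $|v^{\top}\boldsymbol{\epsilon}_m|\le R\|v\|_2\sqrt{2\log(2/\delta)}$ with probability at least $1-\delta$. Substituting $\|v\|_2\le \sigma_m(x)/\sqrt{\lambda}$ bounds this term by $R\sqrt{\tfrac{2}{\lambda}\log(2/\delta)}\,\sigma_m(x)$, and adding it to the $B\,\sigma_m(x)$ bound on the approximation error yields $|f(x)-\mu_m(x)|\le \beta(\delta)\,\sigma_m(x)$, as claimed.

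The one genuinely delicate step is the algebraic identity for $\|k(x,\cdot)-\sum_i v_i k(x_i,\cdot)\|_{\cH_k}^2$: producing the $-\lambda\|v\|_2^2$ correction cleanly and recognizing that it simultaneously controls the RKHS residual norm (the source of the $B\,\sigma_m(x)$ piece) and the Euclidean norm $\|v\|_2$ (the source of the $\sigma_m(x)/\sqrt{\lambda}$ factor in the noise bound). Everything else is the reproducing property, Cauchy--Schwarz, and an off-the-shelf sub-Gaussian tail inequality, with the independence of the design from the noise being exactly what licenses the last of these.
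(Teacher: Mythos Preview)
Your proof is correct and follows the standard argument. Note that the paper does not give its own proof of this lemma; it is quoted verbatim from \citet[Thm.~1]{Vakili_Kernel_Simple_Regret}, and your bias--noise decomposition, the RKHS identity $\|k(x,\cdot)-\sum_i v_i k(x_i,\cdot)\|_{\cH_k}^2 = \sigma_m^2(x)-\lambda\|v\|_2^2$, and the sub-Gaussian tail bound on $v^\top\boldsymbol{\epsilon}_m$ are exactly the steps used in that reference. (Minor remark: the statement writes $h(x)$, which is a typo for $f(x)$; you silently and correctly read it that way.)
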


We would like to emphasize that these assumptions are modeling techniques used as a part of algorithm and not a part of the problem setup. In particular, the function $f$ is \emph{fixed, deterministic} function in $\cH_k$ and the noise is $R$-sub-Gaussian. \\

\label{gamma_explained}
Lastly, given a set of points $\mathbf{X}_m = \{x_1, x_2, \dots, x_m\} \in \cX$, the information gain of the set $\mathbf{X}_m$ is defined as $\gamma_{\mathbf{X}_m} := \frac{1}{2} \log(\det(\mathbf{I}_m + \lambda^{-1}\mathbf{K}_{\mathbf{X}_m,\mathbf{X}_m}))$. Using this, we can define the maximal information gain of a kernel as $\gamma_m := \sup_{\mathbf{X}_m} \gamma_{\mathbf{X}_m}$. Maximal information gain is closely related to the effective dimension of a kernel~\citep{Calandriello_Sketching} and helps characterize the regret performance of kernel bandit algorithms~\citep{Srinivas,Gopalan}. $\gamma_m$ depends only the kernel and $\lambda$ and has been shown to be an increasing sublinear function of $m$~\citep{Srinivas, information_gain_bound}.

\subsection{Sparse approximation of GP models}

The sparsification of GP models refers to the idea of approximating the posterior mean and variance of a GP model, corresponding to a set of observations $\{\mathbf{X}_m, \mathbf{Y}_m\}$, using a subset of query points $\mathbf{X}_m$. In particular, let $\cS$ be a subset of $\mathbf{X}_m$ consisting of $r < m$ points. The approximate posterior mean and variance based on the points in $\cS$, referred to as the inducing set, is given as follows\citep{GP_RKHS_Approx_Equiv}.
\begin{align}
	\tilde{\mu}_m(x) &= z_{\mathcal{S}}(x)^\top \left(\lambda\mathbf{I}_{|\mathcal{S}|}+\mathbf{Z}_{\mathbf{X}_m,\mathcal{S}}^\top\mathbf{Z}_{\mathbf{X}_m, \mathcal{S}}\right)^{-1}\mathbf{Z}^{\top}_{\mathcal{X}_m, \mathcal{S}}\mathbf{Y}_m  \label{eq:aprox_post_mean} \\
	\lambda\tilde{\sigma}^2_m(x) & = \big[k(x,x)- z_{\cS}^{\top} (x)\mathbf{Z}^\top_{\mathbf{X}_m,\mathcal{S}}\mathbf{Z}_{\mathbf{X}_m,\mathcal{S}}\left(\lambda\mathbf{I}_{|\mathcal{S}|}+\mathbf{Z}_{\mathbf{X}_m,\mathcal{S}}^\top\mathbf{Z}_{\mathbf{X}_m, \mathcal{S}}\right)^{-1}z_{\cS}(x)\big],\label{eq:aprox_post_variance}
\end{align}
where $z_{\mathcal{S}}(x)=\mathbf{K}_{\mathcal{S},\mathcal{S}}^{-\frac{1}{2}}k_{\mathcal{S}}(x)$ and $\mathbf{Z}_{\mathbf{X}_m, \mathcal{S}} = [z_{\cS}(x_1), z_{\cS}(x_2), \dots, z_{\cS}(x_m)]^{\top}$. \\

Note that it is sufficient to know the matrix $\mathbf{Z}_{\mathcal{X}_m,\mathcal{S}}^\top\mathbf{Z}_{\mathcal{X}_m,{\mathcal{S}}} \in \R^{r \times r}$, vector $\mathbf{Z}^{\top}_{\mathcal{X}_m, \mathcal{S}}\mathbf{Y}_m \in \R^r$ and the set $\mathcal{S}$ in order for $\tilde{\mu}$ and $\tilde{\sigma}$ to be calculated.

\section{The \textsc{DUETS} Algorithm}
\label{sec:Algorithm}

In this section, we present the proposed algorithm DUETS.  \\

We first describe the randomization at each agent and the shared randomness with the server. Each agent $n$ has a private coin $\mathscr{C}_n$ for generating random bits that are independent of those generated by other agents. Each agent's coin is private to other agents, but known to the central server. As a result, the server can reproduce the random bits generated at all agents.  \\

DUETS employs an epoch-based elimination structure where the domain $\cX$ is successively trimmed across epochs to maintain an active region that contains a global maximizer $x^*$ with high probability for future exploration. Specifically, in each epoch $j$, the server and the agents maintain a common active subset of the domain $\cX_{j} \subseteq \cX$  with $\cX_1$ initialized to $\cX$. The operations in each epoch are as follows.  \\

During the $j^{\text{th}}$ epoch, each agent $n$, using its private coin $\mathscr{C}_n$, generates $\cD_j^{(n)}$, a set of $T_j$ points that are uniformly distributed in the set $\cX_{j}$\footnote{If the active region consists of multiple disjoint regions, then we carry out this step for each region separately. For simplicity of description, we assume the active region consists of a single connected component.}. $T_j$ is set to $\lfloor \sqrt{TT_{j-1}} \rfloor$, with $T_1$ being an input to the algorithm. Each agent $n$ queries all the points in $\cD_j^{(n)}$ and obtains $\mathbf{Y}_j^{(n)} \in \R^{T_j}$, the corresponding vector of reward observations. \\

Since the server has access to the coins of all the agents, it can faithfully reproduce the set $\cD_j = \bigcup_{n = 1}^{N} \cD_j^{(n)}$ without any communication between the server and the agents. In order to efficiently communicate the observed reward values from the agents to the server, we leverage sparse approximation of GP models along with the knowledge of the set $\cD_j$ at the server. The server constructs a global inducing set $\cS_j$ by including each point in $\cD_j$ with probability $p_j := p_0 \sigma_{j,\max}^2$, independent of other points where $\sigma_{j,\max}^2=\sup_{x \in \mathcal{X}_{j}} \sigma_j^2(x)$ and $\sigma_j^2(\cdot)$ is the posterior variance corresponding to points collected in $\mathcal{D}_j$. Here, $p_0=72\log\left(\frac{4NT}{\delta'}\right)$ is an appropriately chosen universal constant which ensures that the approximate posterior statistics constructed using $\cS_j$ are a faithful representation of the true posterior statistics corresponding to the set $\cD_j$ with probability $1-\delta$. The server broadcasts the inducing set $\cS_j$ to all the agents. \\

Upon receiving the inducing set, each agent $n$ computes the projection $v_j^{(n)} \in \R^{|\cS_j|}$ of its reward vector onto the inducing set as follows:
\begin{align}
    v_j^{(n)} :=  \mathbf{Z}_{\mathcal{D}^{(n)}_j,\mathcal{S}_j}^{\top}\mathbf{Y}^{(n)}_j.
\end{align}
Each agent then sends back the lower-dimensional projected observations $v_j^{(n)}$ to the server, which subsequently aggregates them to obtain the vector $\overline{v}_j$ given as
\begin{align}
    \overline{v}_j := \left(\lambda\mathbf{I}_{|\mathcal{S}_j|}+\mathbf{Z}_{\cD_j,\mathcal{S}_j}^\top \mathbf{Z}_{\cD_j, \mathcal{S}_j}\right)^{-1}\left( \sum_{n = 1}^N v_j^{(n)} \right).
\end{align}
Note that the summation $\sum_{n = 1}^N v_j^{(n)}$ equals to $ \mathbf{Z}_{\mathcal{D}_j,\mathcal{S}_j}^{\top}\mathbf{Y}_j$, i.e., projection of the rewards of all agents onto the inducing set. The server then broadcasts the vector $\overline{v}_j$ and $\sigma_{j, \max}$ to all the agents. The benefit of sending $\overline{v}_j$ as opposed to the sum of rewards is that it allows the agents to compute the posterior mean at the agents using their knowledge of the inducing set $\cS_j$ (See. Eqn~\eqref{eq:aprox_post_mean}). \\

As the last step of the epoch, all the agents and the server trim the current set $\cX_{j}$ to $\cX_{j+1}$ using the following update rule:
\begin{align}
    \cX_{j+1} = \left\{ x \in \cX_{j}: \tilde{\mu}_j(x) \geq \sup_{x' \in \cX_{j}} \tilde{\mu}_j(x') - 2\beta(\delta^{'}) \sigma_{j, \max}  \right\},
    \label{eqn:update_rule}
\end{align}
where $ \delta' = \frac{\delta}{2|\mathcal{U}_{T}| \cdot (\log(\log{N}\log{T}))+4)}$ and $\tilde{\mu}_j(x) = z^{\top}_{\cS_j}(x) \overline{v}_j$  is the \emph{approximate} posterior mean computed based on the inducing set $\cS_j$ (See Eqn.~\eqref{eq:aprox_post_mean}). Since the posterior mean provides an estimate for the function values, the update condition is designed to eliminate all points at which the (estimated) function value is smaller than the current best estimate of the maximum value, upto an estimation error. Note that trimming is a deterministic procedure which ensures that all the agents and the server share a common value of $\cX_{j+1}$. \\

A detailed pseudocode of both the agent and the server side of the DUETS is provided in Algorithms~\ref{alg:DISUS_Agent} and~\ref{alg:DISUS_Server} respectively.

\begin{algorithm}
\caption{\DUETS:  Agent $n \in \{1,2,\dots, N\}$}\label{alg:DISUS_Agent}
\begin{algorithmic}[1]
    \STATE \textbf{Input}: Size of the first epoch $T_1$, error probability $\delta$
    \STATE $t \leftarrow 0, j\leftarrow 1$, $\cX_1 \leftarrow \cX$ 
    \WHILE{$t < T$}
        \STATE $\cD_j^{(n)} = \emptyset$
        \FOR {$i \in \{1,2,\dots, T_j\}$}
            \STATE Query a point $x_t^{(n)}$ uniformly at random from $\cX_{j-1}$ using the coin $\mathscr{C}_n$ and observe $y_t^{(n)}$
            \STATE $\cD_j^{(n)} \leftarrow \cD_j^{(n)} \cup \{x_t^{(n)}\} $
            \STATE $t \leftarrow t + 1$
            \IF{$t > T$}
            \STATE \textbf{Terminate}
            \ENDIF
        \ENDFOR
        \STATE Receive the global inducing set $\mathcal{S}_j$
        \STATE Set $v_j^{(n)} \leftarrow  \mathbf{Z}_{\mathcal{D}^{(n)}_j,\mathcal{S}_j}^{\top}\mathbf{Y}^{(n)}_j$, where $\mathbf{Y}^{(n)}_j = [y_{t-T_j}, y_{t-T_j + 1}, \dots, y_t]^{\top}$
        \STATE Receive $\overline{v}_j$ and $\sigma_{j,\mathrm{max}}$ from the server
        \STATE Use $\overline{v}_j$ to compute $\tilde{\mu}_j(\cdot) = z^{\top}_{\cS_j}(\cdot) \overline{v}_j$
        \STATE Update $\cX_{j}$ to $\cX_{j+1}$ using Eqn.~\eqref{eqn:update_rule}
        \STATE $T_{j+1}\leftarrow \lfloor\sqrt{TT_{j}}\rfloor$
        \STATE $j \leftarrow j + 1$    
\ENDWHILE
\end{algorithmic}
\end{algorithm}

\begin{algorithm}
\caption{\DUETS: Server}\label{alg:DISUS_Server}
\begin{algorithmic}[1]
    \STATE \textbf{input}: Size of the first epoch $T_1$, error probability $\delta$
    \STATE $t \leftarrow 0, j\leftarrow 1$, $\cX_1 \leftarrow \cX$ 
    \WHILE{$t < T$}
        \STATE Use the coins $\mathscr{C}_1, \mathscr{C}_2, \dots, \mathscr{C}_N$ to reproduce the sets $\cD_j^{(1)}, \cD_j^{(2)}, \dots, \cD_j^{(N)}$
        \STATE $\cD_j \leftarrow \bigcup_{n =1}^N \cD_j^{(n)}$
        \STATE Set $\sigma_{j, \max} \leftarrow \sup_{x \in \cX_{j}} \sigma_{j}(x)$
        \STATE Construct the set $\cS_j$ by including each point from $\cD_j$ with probability $p_j$, independent of other points
        \STATE Broadcast $\cS_j$ to all the agents
        \STATE Receive $v_j^{(n)}$ from all agents $n \in \{1,2,\dots,N\}$
        \STATE Set $\overline{v}_j \leftarrow \left(\lambda\mathbf{I}_{|\mathcal{S}_j|}+\mathbf{Z}_{\cD_j^{(n)},\mathcal{S}_j}^\top \mathbf{Z}_{\cD_j^{(n)}, \mathcal{S}_j}\right)^{-1}(\sum_{n = 1}^N v_j^{(n)} ).$
        \STATE Broadcast $\overline{v}_j$ and $\sigma_{j, \max}$ to all the agents
        \STATE Update $\cX_{j}$ to $\cX_{j+1}$ using Eqn.~\eqref{eqn:update_rule}
        \STATE $t \leftarrow t + T_j$
        \STATE $T_{j+1}\leftarrow \lfloor\sqrt{TT_{j}}\rfloor$
        \STATE $j \leftarrow j + 1$ 
    \ENDWHILE
\end{algorithmic}
\end{algorithm}

\section{Performance Analysis}

The following theorem characterizes the regret performance and communication cost of DUETS.

\begin{theorem}\label{Main_Theorem}
    Consider the distributed kernel bandit problem described in Section~\ref{sec:problem_formulation}. For a given $\delta\in (0,1)$, let the policy parameters of \DUETS be such that $T_1 \geq \overline{M}/N$ and $p_0=72\log{\frac{4N}{\delta}}$. Then  with probability at least $1 - \delta$,  the regret and communication cost incurred by \DUETS satisfy the following relations:
    \begin{align*}
        R_{\mathrm{DUETS}} & = \tilde{\cO}(\sqrt{NT\gamma_{NT}}\log(T/\delta)) \\
        C_{\mathrm{DUETS}} & = \tilde{\cO}(\gamma_{NT}).
    \end{align*}
    Here, $\overline{M}$ is a constant that depends only upon the kernel $k$ and the domain $\cX$ and it is independent of $N$ and $T$.\footnote{The constant $\overline{M}$ is the same as one in Lemma~\ref{max_var_bound}, which has been adopted from~\citet{Sudeep_Uniform_Sampling}. We refer the reader to~\citet{Sudeep_Uniform_Sampling} for an exact expression of the constant and additional related discussion.}
\end{theorem}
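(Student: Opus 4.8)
The plan is to prove the two claims separately, both resting on a common high‑probability \emph{good event} and on the fact that \DUETS runs for only $J=\cO(\log\log T)$ epochs: since $T_{j+1}=\lfloor\sqrt{TT_j}\rfloor$ gives $\log T-\log T_{j+1}=\tfrac12(\log T-\log T_j)$, the gap to $\log T$ halves each epoch and $T_J=\Theta(T)$ after $\cO(\log\log T)$ epochs (this is what the $\log(\log N\log T)$ in the definition of $\delta'$ accounts for). The good event is the intersection of three families of events over which: (i) in every epoch the maximal posterior variance decays, $\sigma_{j,\max}^2=\tilde{\cO}(\gamma_{NT_j}/(NT_j))$, and the sketch is small, $|\cS_j|=\tilde{\cO}(\gamma_{NT})$; (ii) in every epoch the inducing set $\cS_j$ yields a faithful approximation, so $\tilde\mu_j,\tilde\sigma_j$ are within a lower‑order perturbation of the true posterior statistics computed from all of $\cD_j$; and (iii) for every epoch and every point of the discretization $\cU_T$, the confidence bound of the Vakili lemma holds for the posterior built from $\cD_j$. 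Choosing $\delta'$ as in the algorithm pays for the union bound over the $\cO(\log\log T)$ epochs and the $\mathrm{poly}(T)$ points of $\cU_T$; the key structural observation making this legitimate is that $\cD_j$ depends only on the shared coins and on the noise of epochs $<j$, hence is independent of the noise observed within epoch $j$, exactly as required by the Vakili lemma and by the sketching guarantee.

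For (i) I would invoke Lemma~\ref{max_var_bound} (adapted from \citet{Sudeep_Uniform_Sampling}): sampling $m$ points uniformly on a region bi‑Lipschitz equivalent to $\cX$ forces the maximal posterior variance below $\tilde{\cO}(\gamma_m/m)$ once $m\ge\overline M$. Here $m=|\cD_j|=NT_j\ge NT_1\ge\overline M$ by the hypothesis $T_1\ge\overline M/N$, and Assumption~\ref{assumption:f_level_set_regularity} is precisely what guarantees that each connected component of the trimmed set $\cX_j$ (which, up to the confidence slack, is a super‑level set of $f$) is bi‑Lipschitz equivalent to $\cX$, so the bound transfers component by component, with $M_f$ bounding the number of components. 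This same bound controls the sketch size: $\E[|\cS_j|]=p_j|\cD_j|=p_0\sigma_{j,\max}^2\,NT_j=\tilde{\cO}(p_0\gamma_{NT_j})=\tilde{\cO}(\gamma_{NT})$, and a Chernoff bound (folded into the good event) gives $|\cS_j|=\tilde{\cO}(\gamma_{NT})$. For (ii) I would quote the standard sparse‑approximation guarantee (e.g.\ \citet{GP_RKHS_Approx_Equiv,Calandriello_Sketching,Approx_Dis}): because $\sigma_{j,\max}^2$ dominates the ridge leverage score of every point in $\cD_j$ and $p_0=\Theta(\log(NT/\delta))$, the sketch is $\epsilon$‑accurate with the stated probability, so $|\mu_j(x)-\tilde\mu_j(x)|$ is a $o(1)$ multiple of $\beta(\delta')\sigma_{j,\max}$ uniformly over the domain.

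On the good event the regret bound follows from an elimination argument. By induction on $j$, $[x^{*}]_{\cU_T}$ (whose value differs from $f(x^{*})$ by at most $B/T$) is never trimmed: combining (ii) and (iii) gives $|f(x)-\tilde\mu_j(x)|\le (1+o(1))\beta(\delta')\sigma_{j,\max}$ uniformly over $\cX_j\cap\cU_T$, and since $f(x^{*})\ge f(x')$ for all $x'$, the factor $2$ in~\eqref{eqn:update_rule} suffices to keep $x^{*}$ in $\cX_{j+1}$; the same two‑sided estimate shows every surviving $x\in\cX_{j+1}$ has $f(x^{*})-f(x)=\cO(\beta(\delta')\sigma_{j,\max})$. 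Hence epoch $j+1$ contributes at most $NT_{j+1}\cdot\cO(\beta(\delta')\sigma_{j,\max})=\cO\!\big(\beta(\delta')\,(NT_{j+1}/\sqrt{NT_j})\sqrt{\gamma_{NT_j}}\big)$, and $T_{j+1}\le\sqrt{TT_j}$ gives $NT_{j+1}/\sqrt{NT_j}\le\sqrt{NT}$, so each epoch contributes $\tilde{\cO}(\beta(\delta')\sqrt{NT\gamma_{NT}})$. Summing over the $\cO(\log\log T)$ epochs (the first epoch costs only $\cO(NT_1)=\cO(\overline M)$ via a trivial per‑step regret bound) and using $\beta(\delta')=\cO(\sqrt{\log(1/\delta')})=\tilde{\cO}(\sqrt{\log(T/\delta)})$ yields $R_{\DUETS}=\tilde{\cO}(\sqrt{NT\gamma_{NT}}\log(T/\delta))$. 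For the communication cost, in epoch $j$ the agents send $v_j^{(n)}\in\R^{|\cS_j|}$ ($|\cS_j|$ reals on average), the server broadcasts $\cS_j$ ($d\,|\cS_j|$ reals, $d$ a constant), $\overline v_j\in\R^{|\cS_j|}$, and the scalar $\sigma_{j,\max}$, for a per‑epoch total $\cO(|\cS_j|)=\tilde{\cO}(\gamma_{NT})$; summing over $\cO(\log\log T)$ epochs gives $C_{\DUETS}=\tilde{\cO}(\gamma_{NT})$.

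The hard part will be the interface between the sketching guarantee and the elimination step: I must verify that swapping the true posterior mean $\mu_j$ (for which the Vakili lemma gives a clean confidence bound) for the sketched mean $\tilde\mu_j$ perturbs the confidence width by strictly less than what the constant $2$ in~\eqref{eqn:update_rule} absorbs, \emph{simultaneously} for soundness ($x^{*}$ retained) and completeness (suboptimal points eliminated at rate $\beta(\delta')\sigma_{j,\max}$); this is what pins down the value of $p_0$ and couples the sketch accuracy to $\delta'$. A secondary but genuine subtlety is the domain‑transfer underlying (i): checking that the trimmed sets $\cX_j$ produced by~\eqref{eqn:update_rule} satisfy the regularity needed to invoke the uniform‑sampling variance bound of \citet{Sudeep_Uniform_Sampling} — precisely the content of Assumption~\ref{assumption:f_level_set_regularity} — and handling the multi‑component case cleanly. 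Everything else is bookkeeping: the union bounds defining $\delta'$, the Chernoff bound on $|\cS_j|$, the geometric count of epochs, and the discretization error from $\cU_T$.
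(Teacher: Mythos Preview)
Your proposal is correct and follows essentially the same route as the paper: decompose into epochs, bound the per-epoch regret by $NT_j\cdot\Delta_j$, control $\Delta_j$ via the confidence bound plus sketch accuracy to get $\Delta_j=\cO(\beta(\delta')\sigma_{j-1,\max})$, invoke the uniform-sampling variance bound to get $\sigma_{j,\max}^2=\tilde\cO(\gamma_{NT_j}/(NT_j))$, use $T_{j}/\sqrt{T_{j-1}}\le\sqrt T$ to obtain $\tilde\cO(\beta(\delta')\sqrt{NT\gamma_{NT}})$ per epoch, and sum over $\cO(\log\log(NT))$ epochs; the communication bound is likewise identical, via the Chernoff bound on $|\cS_j|$. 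The only place to tighten is your treatment of the first epoch: writing $NT_1=\cO(\overline M)$ presumes $T_1=\overline M/N$ exactly, whereas the paper instead chooses $T_1\asymp\sqrt{T/N}$ (or $\sqrt T$ when $N\le\gamma_{NT}$) so that $NT_1=\cO(\sqrt{NT})$ directly---either choice works, but you should make yours explicit and correspondingly adjust the epoch count to $\cO(\log\log(NT))$ rather than $\cO(\log\log T)$.
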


As shown in above theorem, \DUETS achieves order-optimal regret as it matches the lower bound established in~\citet{Scarlett2017LowerBoundGP} upto logarithmic factors. \DUETS is the \emph{first algorithm} to close this gap to the lower bound in the distributed setup and achieve order-optimal regret performance. Moreover, \DUETS incurs a communication cost that is sublinear in both $T$ and $N$ for all kernels. Furthermore, it can be much smaller that $NT$, depending upon the smoothness of the kernel. For example, using the bounds on information gain~\citep{information_gain_bound}, we can show that the communication cost incurred by \DUETS is $\cO(\log^{d}(NT))$.

\begin{proof}
We provide a sketch of the proof of Theorem~\ref{Main_Theorem} here. The regret bound is obtained by first bounding the regret incurred by \DUETS in each epoch $j$ and then summing the regret across different epochs. In any epoch $j$, the agents take purely exploratory by uniformly sampling the region $\cX_j$. Thus, to bound the regret incurred at any step during an epoch, we use the crude bound $\Delta_j := \sup_{x \in \cX_{j}} (f(x^*) - f(x))$. Consequently, the regret during the $j^{\text{th}}$ epoch, denoted by $R^{(j)}$, is upper bounded by $N \cdot \Delta_j T_j$. Note that the update criterion (Eqn.~\eqref{eqn:update_rule}) is designed to obtain a refined localization of $x^*$ by eliminating the points with low function values consequently leading to smaller values of $\Delta_j$ as the algorithm proceeds. The epoch lengths are carefully chosen to balance the increase in epoch length with the decrease in $\Delta_j$ to obtain the tightest bound. These ideas are captured in the following lemmas from the regret bound follows.

\begin{lemma}\label{grid_bound} 
Let $\Delta_{j}: =\sup_{x\in \mathcal{X}_{j}}f(x^*)-f(x)$. Then, the following bound holds all epochs $j \geq 1$ with probability $1- \frac{\delta}{2}$. 
\begin{equation*}
\Delta_j\leq 8\beta(\delta') \cdot \left( \sup_{x\in \mathcal{X}_{j-1}}\sigma_{j}(x) \right) +\frac{4B}{T},
\end{equation*}   
where $\delta' = \frac{\delta}{2(\log(\log{N}+\log{T})+4) |\mathcal{U}_{T}|}$ and $\cU_T$ denotes the discretization defined in Assumption~\ref{Grid}.
\end{lemma}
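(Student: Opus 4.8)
The plan is to argue on a high-probability event and then proceed deterministically. First I would build the good event $\mathcal{E}$ on which two things hold in every epoch $j$: (i) the GP confidence bound of~\citet[Thm.~1]{Vakili_Kernel_Simple_Regret} holds simultaneously at every point of the discretization $\cU_T$, i.e.\ $|f(u)-\mu_j(u)|\le\beta(\delta')\sigma_j(u)$ for all $u\in\cU_T$; and (ii) the random inducing set $\cS_j$ is an accurate sketch, in the sense that $\tilde\mu_j$ and $\tilde\sigma_j$ track $\mu_j$ and $\sigma_j$ up to universal multiplicative constants on $\cX_{j-1}$. For (i), the crucial point is that the queries in $\cD_j$ are drawn through the private coins \emph{before} the corresponding rewards are observed, so the cited bound --- which requires only that the query points be independent of the noise --- applies pointwise; a union bound then costs a factor $|\cU_T|\cdot(\#\text{epochs})$. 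Here I would use that the schedule $T_{j+1}=\lfloor\sqrt{TT_j}\rfloor$ gives $\log T-\log T_j=2^{-(j-1)}(\log T-\log T_1)$, so together with $T_1\ge\overline M/N$ there are only $O(\log\log(NT))$ epochs; this is exactly why $\delta'=\delta/\big(2(\log(\log N+\log T)+4)|\cU_T|\big)$ makes the failure probability of (i) at most $\delta/2$, while (ii) is absorbed into the choice of $p_0$ (a standard ridge-leverage-score / matrix-Chernoff statement, which is why $p_j=p_0\sigma_{j,\max}^2$ with $p_0=\Theta(\log(NT/\delta))$).

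Next I would establish, by induction on $j$, that a maximizer survives every trimming step, so that $x^*\in\cX_{j-1}$ whenever epoch $j$ is analyzed (base case $\cX_0=\cX$). For the step, suppose $x^*\in\cX_{j-1}$; then~\eqref{eqn:update_rule} removes $x^*$ only if $\tilde\mu_j(\hat x)-\tilde\mu_j(x^*)>2\beta(\delta')\sigma_{j,\max}$, where $\hat x:=\argmax_{x'\in\cX_{j-1}}\tilde\mu_j(x')$. Decomposing $\tilde\mu_j(\hat x)-\tilde\mu_j(x^*)=\big(\tilde\mu_j(\hat x)-f(\hat x)\big)+\big(f(\hat x)-f(x^*)\big)+\big(f(x^*)-\tilde\mu_j(x^*)\big)$ and using $f(\hat x)\le f(x^*)$, it suffices to have a \emph{uniform pointwise bound} $|\tilde\mu_j(x)-f(x)|\le 3\beta(\delta')\sigma_{j,\max}+2B/T$ valid for every $x\in\cX_{j-1}$ (in particular at the data-dependent $\hat x$ and at $x^*$, which need not be on the grid), where $\sigma_{j,\max}:=\sup_{x\in\cX_{j-1}}\sigma_j(x)$. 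This is where $\mathcal{E}$, Assumption~\ref{Grid}, and Assumption~\ref{assumption:f_level_set_regularity} enter: I would pass from $x$ to $[x]_{\cU_T}$, paying $B/T$ for $f$ and a matching term for $\tilde\mu_j$ via a crude bound on $\|\tilde\mu_j\|_{\cH_k}$ (using $\tilde\mu_j=\sum_{s\in\cS_j}\alpha_s k(s,\cdot)$ and the explicit form of $\overline v_j$), invoke (i) at the grid point to replace $f$ by $\mu_j$ at the cost of $\beta(\delta')\sigma_j([x]_{\cU_T})\le\beta(\delta')\sigma_{j,\max}$, and finally invoke (ii) to replace $(\mu_j,\sigma_j)$ by $(\tilde\mu_j,\sigma_{j,\max})$ up to the remaining constant; Assumption~\ref{assumption:f_level_set_regularity} is what keeps the active regions $\cX_j$ regular enough for these manipulations (and for the companion bound in Lemma~\ref{max_var_bound}).

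Finally, the bound on $\Delta_j$ follows from the same pointwise estimate applied at two points. Fix any $x\in\cX_j$. Since $x$ survived trimming, $\tilde\mu_j(x)\ge\sup_{x'\in\cX_{j-1}}\tilde\mu_j(x')-2\beta(\delta')\sigma_{j,\max}\ge\tilde\mu_j(x^*)-2\beta(\delta')\sigma_{j,\max}$, the last step using $x^*\in\cX_{j-1}$ from the induction. Converting $\tilde\mu_j(x)$ and $\tilde\mu_j(x^*)$ to $f(x)$ and $f(x^*)$ with the uniform pointwise bound gives $f(x^*)-f(x)\le 2\beta(\delta')\sigma_{j,\max}+2\big(3\beta(\delta')\sigma_{j,\max}+2B/T\big)=8\beta(\delta')\sigma_{j,\max}+4B/T$, and taking the supremum over $x\in\cX_j$ yields the claim.

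I expect the main obstacle to be precisely the uniform pointwise bound $|\tilde\mu_j(x)-f(x)|\lesssim\beta(\delta')\sigma_{j,\max}+B/T$ over all of $\cX_{j-1}$: the cited GP confidence bound is only pointwise and is stated for the \emph{exact} posterior mean $\mu_j$, so one must simultaneously (a) discretize and union-bound --- the origin of the $|\cU_T|$ factor hidden in $\delta'$ and of the $B/T$ correction via Assumption~\ref{Grid}; (b) control the oscillation of the \emph{approximate} mean $\tilde\mu_j$ across a grid cell, which needs a crude but sufficient polynomial bound on $\|\tilde\mu_j\|_{\cH_k}$; and (c) transfer from $(\mu_j,\sigma_j)$ to $(\tilde\mu_j,\tilde\sigma_j)$ using the high-probability accuracy of $\cS_j$ guaranteed by the choice of $p_0$ --- after which one must verify that the accumulated constants are small enough both to close the induction $x^*\in\cX_j$ and to match the explicit $8$ and $4$ in the statement.
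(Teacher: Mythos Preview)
Your plan mirrors the paper's proof: discretize via Assumption~\ref{Grid}, take a union bound of the pointwise confidence interval over $\cU_T$ and over the $O(\log\log NT)$ epochs, invoke the $0.5$-accuracy of $\cS_j$ (Lemma~\ref{lemma:accuracy_of_S_j}) to pass between approximate and exact posterior variance, prove $x^*\in\cX_j$ by induction, and combine with the trimming rule~\eqref{eqn:update_rule} to bound $\Delta_j$. Two small remarks: Assumption~\ref{assumption:f_level_set_regularity} plays no role in this lemma (it enters only in Lemma~\ref{max_var_bound}), and rather than your chain (i)$\to$(c) the paper quotes a confidence bound from~\citet{Vakili_Suddep_Sketching} that applies \emph{directly} to the approximate pair $(\tilde\mu_j,\tilde\sigma_j)$ --- but it then invokes Assumption~\ref{Grid} for $\tilde\mu_j$ without an explicit RKHS-norm bound and writes the induction step with the exact $\mu$ at non-grid points, so the concerns you flag in (b) and in your final sentence about closing the constants are handled just as informally in the paper.
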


\begin{lemma} 
    Let $\sigma_j^2(\cdot)$ denote the posterior variance corresponding to the set $\cD_j$ obtained by sampling $NT_j$ points uniformly at random from the domain $\cX_{j}$. Then, for $T_1 \geq \overline{M}(\delta)/N$ and for any $f$ satisfying Assumption~\ref{assumption:f_level_set_regularity}, the following bound holds with probability $1 - \delta$ for all epochs $j\geq 1$:
    \begin{align*}
        \sup_{x \in \cX_{j}} \sigma_j^2(x) \leq C_{f, \cX} \cdot \frac{\gamma_{NT_j}}{NT_j}.
    \end{align*}
    Here $C_f$ denotes a constant that depends only on $f$ and the domain $\cX$ and is independent of both $N$ and $T$.
    \label{max_var_bound}
\end{lemma}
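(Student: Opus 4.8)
The idea is to establish the bound separately for each epoch by reducing it to the centralized maximum‑posterior‑variance guarantee for uniform sampling of \citet{Sudeep_Uniform_Sampling}, and then to stitch the epochs together with a union bound. Fix an epoch $j$. Here $\sigma_j^2$ is the GP posterior variance associated with $\cD_j=\bigcup_{n=1}^N\cD_j^{(n)}$, and since agent $n$ draws $\cD_j^{(n)}$ as $T_j$ i.i.d.\ uniform points in $\cX_j$ from an independent private coin, $\cD_j$ is precisely a set of $NT_j$ i.i.d.\ uniform draws from $\cX_j$ — the superposition property of uniform sampling. Conditioned on $\cX_j$ we are therefore exactly in the setting of the centralized result, which yields $\sup_{x\in\cX_j}\sigma_j^2(x)\le C_{f,\cX}\,\gamma_{NT_j}/(NT_j)$ with high probability as soon as (i) $NT_j$ exceeds the kernel/domain‑dependent threshold $\overline{M}(\delta)$ of that result and (ii) $\cX_j$ carries the geometric regularity the result needs. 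Condition (i) is free: since $T\ge T_j$, the recursion $T_{j+1}=\lfloor\sqrt{TT_j}\rfloor$ is nondecreasing, so $NT_j\ge NT_1\ge\overline{M}(\delta)$ for every $j$ under the hypothesis $T_1\ge\overline{M}(\delta)/N$; and the constants $C_{f,\cX}$ and $\overline{M}(\delta)$ produced this way depend only on the kernel, on $\cX$, and on the regularity constants $M_f,L_f,L_f'$ of Assumption~\ref{assumption:f_level_set_regularity}, never on $N$ or $T$.

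The crux is condition (ii), because the region $\cX_j=\bigcap_{i<j}\{x\in\cX:\tilde\mu_i(x)\ge\sup_{\cX_i}\tilde\mu_i-2\beta(\delta')\sigma_{i,\max}\}$ is an intersection of super‑level sets of the \emph{estimates} $\tilde\mu_i$, not of $f$. I would pin down its geometry by sandwiching it between genuine level sets of $f$. On the confidence event that underlies Lemma~\ref{grid_bound} (probability at least $1-\delta/2$), each $\tilde\mu_i$ is within $\beta(\delta')\sigma_{i,\max}+\cO(B/T)$ of $f$ uniformly on $\cX_i$ and $x^*\in\cX_i$, so each set $\{x:\tilde\mu_i(x)\ge\sup_{\cX_i}\tilde\mu_i-2\beta(\delta')\sigma_{i,\max}\}$ is trapped between $\cL_{\eta_i^+}$ and $\cL_{\eta_i^-}$ with $\eta_i^{\pm}=f(x^*)-4\beta(\delta')\sigma_{i,\max}\mp\cO(B/T)$ and $\eta_i^+\ge\eta_i^-$. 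Intersecting over $i<j$ and using the nestedness of level sets of $f$ gives $\cL_{\eta^+}\subseteq\cX_j\subseteq\cL_{\eta^-}$ with $\eta^{\pm}=\max_{i<j}\eta_i^{\pm}$ and $\eta^+\ge\eta^-$. By Assumption~\ref{assumption:f_level_set_regularity}, for \emph{every} threshold in $[-B,B]$ the corresponding level set is a disjoint union of at most $M_f$ closed connected components, each bi‑Lipschitz equivalent to $\cX$ with the common pair $L_f,L_f'$; in particular every such level set has total volume in a fixed interval independent of the threshold. Hence the uniform law on $\cX_j$ has density comparable (within absolute constants, on the common support) to the uniform law on $\cL_{\eta^-}$, which is enough to transfer to $\cX_j$ the integral‑operator bounds that drive the centralized result — effective dimension $\cO(\gamma_{NT_j})$ and uniformly bounded eigenfunctions up to the bi‑Lipschitz distortion. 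Since the algorithm samples each connected component of $\cX_j$ separately, this closes condition (ii). This regularity transfer — from the fixed domain $\cX$ to an evolving region that is only sandwiched, not itself a level set — is where Assumption~\ref{assumption:f_level_set_regularity} is genuinely used and is, I expect, the one delicate point in the argument.

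The centralized bound itself is obtained along standard lines — operator Bernstein concentration shows that, after $m\ge\overline{M}(\delta)$ i.i.d.\ uniform samples, the empirical regularized Gram operator dominates a constant fraction of its mean, so the posterior variance is within a constant factor of its ``population'' counterpart (the variance produced by $m$ virtual copies of the uniform measure), whose supremum is controlled by the effective‑dimension/eigenvalue‑tail split defining $\gamma_m$, giving $\cO(\gamma_m/m)$. To finish, I would union‑bound the per‑epoch statement over the $m_{\mathrm{ep}}=\cO(\log\log(T/T_1))$ epochs — the count follows because the exponents obey $1-\log_T T_{j+1}=\tfrac12(1-\log_T T_j)$ — each at failure probability $\delta/(2m_{\mathrm{ep}})$, and intersect with the confidence event above, for a total failure probability at most $\delta$. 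Everything outside the regularity transfer is bookkeeping of constants and of the probability budget.
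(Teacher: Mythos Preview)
The paper does not give its own proof of this lemma: immediately after stating it, the authors write that it ``is [a] result adopted from the recent work by~\citet{Sudeep_Uniform_Sampling},'' and no argument appears in the appendix. So there is no paper-side proof to compare against beyond that citation. Your high-level plan --- use the superposition property so that $\cD_j$ is $NT_j$ i.i.d.\ uniform samples from $\cX_j$, invoke the centralized maximum-posterior-variance bound of \citet{Sudeep_Uniform_Sampling} on each epoch (the threshold $NT_j\ge NT_1\ge\overline{M}(\delta)$ being automatic from the epoch recursion), and union-bound over the $\cO(\log\log)$ epochs --- is exactly the intended reduction and matches how the paper uses the lemma.

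The one place your plan is not yet a proof is precisely the step you flag as delicate, and the specific sandwich you write down does not hold. The outer containment is fine: if $x$ survives the trim at epoch $i$ then the confidence bound gives $f(x)\ge f(x^*)-4\beta(\delta')\sigma_{i,\max}-\cO(B/T)$. But the inner containment fails with your threshold. Running the same confidence-bound arithmetic in the other direction, $f(x)\ge\eta$ forces $x$ into the trim set only when $\eta\ge f(x^*)+\cO(B/T)$, not $\eta\approx f(x^*)-4\beta(\delta')\sigma_{i,\max}$; the gap is the full $4\beta\sigma_{i,\max}$, not just the discretization slack. Consequently the inner level set in your sandwich is generically $\{x^*\}$, and the two-sided volume/density comparison you rely on (``uniform law on $\cX_j$ comparable to the uniform law on $\cL_{\eta^-}$'') does not follow from this argument. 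The regularity transfer from level sets of $f$ to the estimate-defined region $\cX_j$ has to come from something stronger --- this is exactly what Assumption~\ref{assumption:f_level_set_regularity} is engineered to supply in \citet{Sudeep_Uniform_Sampling}, and you should defer to that paper for how the transfer is actually carried out rather than the sandwich as written.
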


\begin{lemma}\label{lemma:epoch_number}
The total number of epochs in \textsc{DUETS} over a time horizon of $T$ is less than $\log(\log(\max\{N,T\})) + 4$. 
\end{lemma}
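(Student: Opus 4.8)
The plan is to reduce the statement to an elementary growth estimate for the epoch lengths. These satisfy $T_1\ge\overline{M}/N$ (an input, a positive integer) and $T_{j+1}=\lfloor\sqrt{T T_j}\rfloor$, and, crucially, \DUETS runs epoch $j$ exactly when the number of queries already issued, $\sum_{i<j}T_i$, is still below $T$; hence the number of epochs equals $J^\star:=\min\{J:\sum_{i=1}^{J}T_i\ge T\}$. First I would record two elementary facts. (i) Whenever $T_j\le T$ one has $\sqrt{TT_j}\ge T_j$, so, $T_j$ being an integer, $T_j\le T_{j+1}\le\sqrt{T\cdot T}=T$; thus $(T_j)_{j\ge1}$ is nondecreasing and bounded by $T$ (the degenerate case $T_1>T$ gives a single epoch). (ii) If $j^{\star\star}$ denotes the first index with $T_{j^{\star\star}}\ge T/2$, then $T_{j^{\star\star}}+T_{j^{\star\star}+1}\ge T$ by (i), so $J^\star\le j^{\star\star}+1$ and it suffices to control $j^{\star\star}$.

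To bound $j^{\star\star}$ I would work on a logarithmic scale (logarithms base $2$, matching the $+4$ in the statement). Without the floor, $T_{j+1}=\sqrt{TT_j}$ gives $\log(T/T_{j+1})=\tfrac12\log(T/T_j)$, hence $\log(T/T_j)=2^{-(j-1)}\log(T/T_1)$, i.e.\ $T_j=T\,(T_1/T)^{2^{-(j-1)}}$. To accommodate the floor I would use $T_{j+1}=\lfloor\sqrt{TT_j}\rfloor\ge\sqrt{TT_j}-1\ge\sqrt{TT_j}\,(1-1/\sqrt T)$, which holds since $T_j\ge1$; taking logarithms makes the recursion affine, $\log(T/T_{j+1})\le\tfrac12\log(T/T_j)+\varepsilon$ with $\varepsilon:=-\log(1-1/\sqrt T)=O(1/\sqrt T)$, and unrolling gives $\log(T/T_j)\le 2^{-(j-1)}\log(T/T_1)+2\varepsilon$, i.e.\ $T_j\ge(1-1/\sqrt T)^2\,T\,(T_1/T)^{2^{-(j-1)}}$, which is at least $\tfrac{8}{9}\,T\,(T_1/T)^{2^{-(j-1)}}$ for $T$ above an absolute constant. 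I expect this step — showing the floor costs only a $(1-1/\sqrt T)^2$ multiplicative factor, rather than the much larger loss one would incur from a crude bound like $\lfloor x\rfloor\ge x/2$ — together with the careful bookkeeping of constants needed to land at exactly $4$, to be the only real content of the proof; everything else is routine.

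Finally I would read off the count. By the previous bound, $T_j\ge T/2$ as soon as $(T_1/T)^{2^{-(j-1)}}\ge 9/16$, i.e.\ $2^{-(j-1)}\log(T/T_1)\le\log(16/9)$, i.e.\ $j\ge 1+\log\big(\log(T/T_1)/\log(16/9)\big)$; since $\log(1/\log(16/9))<1$, this holds once $j\ge 2+\log\log(T/T_1)$, so $j^{\star\star}<3+\log\log(T/T_1)$ and therefore $J^\star\le j^{\star\star}+1<4+\log\log(T/T_1)$. (The small cases $T/T_1<4$ are immediate: nondecreasing $T_j$ forces $J^\star\le4$.) Since $T_1$ is a positive integer we have $T/T_1\le T\le\max\{N,T\}$, and as $\log\log$ is increasing this yields $J^\star<4+\log\log(\max\{N,T\})$, the claimed bound; writing $\max\{N,T\}$ in place of $T$ also renders the bound uniform over every admissible choice $T_1\ge\overline{M}/N$, with $\overline{M}$ depending only on $k$ and $\cX$.
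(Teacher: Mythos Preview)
Your argument is correct and follows the same core idea as the paper's proof: unroll the recursion to obtain a closed-form lower bound of the shape $T_j\gtrsim T\,(T_1/T)^{2^{-(j-1)}}$, and then read off the first $j$ for which $T_j$ exceeds a fixed fraction of $T$. Two points are worth contrasting. First, you explicitly handle the floor via $\lfloor\sqrt{TT_j}\rfloor\ge\sqrt{TT_j}(1-1/\sqrt T)$, arriving at $T_j\ge(1-1/\sqrt T)^2\,T\,(T_1/T)^{2^{-(j-1)}}$; the paper simply uses $T_j\ge\sqrt{TT_{j-1}}$ and never addresses the floor, so your version is the more rigorous of the two on this point. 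Second, you need only $T_1\ge1$ to conclude $J^\star<4+\log\log T$ and then weaken to $\max\{N,T\}$ for uniformity; the paper instead invokes $T_1\ge T/N$ to bring $N$ into the bound directly via $T_j\ge T\,N^{-2^{1-j}}$ and then sets $j^*=\max\{\log\log T,\log\log N\}$. Your route is thus slightly more self-contained and does not rely on any lower bound on $T_1$ beyond integrality, while the paper's is shorter but leans on an assumption about $T_1$ that is not part of the lemma statement.
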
 

Lemma~\ref{max_var_bound} is result adopted from the recent work by~\citet{Sudeep_Uniform_Sampling} that establishes bounds on worst-case posterior variance corresponding to a set of randomly sampled points. \\

For the bound on communication cost, note that each epoch $j$, the server broadcasts the inducing set $\cS_j$, which consists of $|\cS_j|$ vectors in $\R^d$, the vector $\overline{v}_j \in \R^{|\cS_j|}$ and the scalar $\sigma_{j, \max}$, resulting in a downlink cost of $\cO(|\cS_j|)$ in epoch $j$. Similarly, since each agent just uploads ${v}_j^{(n)} \in \R^{|\cS_j|}$, the uplink cost of \DUETS in epoch $j$ also satisfies $\cO(|\cS_j|)$. Consequently, the communication cost of \DUETS in epoch $j$ is bounded by $\cO(|\cS_j|)$. The following lemma gives a high probability bound on the $|\cS_j|$. \\

\begin{lemma}
    Let $\cS_j$ denote the inducing set construct in $j^{\text{th}}$ epoch, as outlined in Section~\ref{sec:Algorithm}. Then, with probability at least $1 - \delta$, 
    \begin{align*}
        |\cS_j| \leq C_{f, \cX} \cdot \left(3+\log\left(\frac{\log(\log{N}\log{T})}{\delta}\right)\right) \cdot \gamma_{NT},
    \end{align*}
    holds for all epochs $j$. In the above expression, $C_{f, \cX}$ is same as the constant in Lemma~\ref{max_var_bound}.
    \label{lemma:inducing_set_size}
\end{lemma}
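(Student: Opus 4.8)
The plan is to observe that $|\cS_j|$ is, by construction, a sum of $|\cD_j| = NT_j$ independent indicators, one per point of $\cD_j$, each equal to $1$ with probability $p_j = p_0\,\sigma_{j,\max}^2$. The coins the server uses to build $\cS_j$ are drawn afresh and are independent of the query points and of all earlier randomness, so if we condition on the $\sigma$-algebra $\mathcal{F}_j$ generated by everything through the sampling of $\cD_j$ in epoch $j$ (in particular $\cX_j$, the observations so far, $\cD_j$, and hence $\sigma_{j,\max}^2$ and $p_j$ become deterministic), then $|\cS_j| \mid \mathcal{F}_j \sim \mathrm{Binomial}(NT_j, p_j)$ with conditional mean $NT_j\, p_j = p_0\, NT_j\, \sigma_{j,\max}^2$.

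First I would invoke Lemma~\ref{max_var_bound} with confidence parameter $\delta/2$ (which fixes the required lower bound on $T_1$) to obtain an event $\mathcal{E}$ with $\Pr(\mathcal{E}) \ge 1 - \delta/2$ on which $\sigma_{j,\max}^2 \le C_{f,\cX}\,\gamma_{NT_j}/(NT_j)$ holds simultaneously for every epoch $j$. On $\mathcal{E}$, the conditional mean of $|\cS_j|$ is at most $p_0\,C_{f,\cX}\,\gamma_{NT_j} \le p_0\,C_{f,\cX}\,\gamma_{NT}$, using that $m \mapsto \gamma_m$ is nondecreasing and $T_j \le T$. In passing one records that this is precisely the regime in which $p_j < 1$, so the inclusion step is well defined; this is where the hypothesis $T_1 \ge \overline{M}/N$ enters.

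Next I would apply a standard multiplicative Chernoff bound for binomial upper tails: for any $\delta'' \in (0,1)$ there are absolute constants $c_1, c_2$ such that, conditionally on $\mathcal{F}_j$ and on $\mathcal{E}$, with probability at least $1 - \delta''$ one has $|\cS_j| \le c_1\,\E[|\cS_j| \mid \mathcal{F}_j] + c_2\log(1/\delta'')$. By Lemma~\ref{lemma:epoch_number} the number of epochs is at most $J := \log(\log(\max\{N,T\})) + 4$, so taking $\delta'' = \delta/(2J)$ and a union bound over epochs gives, on $\mathcal{E}$ and with probability at least $1 - \delta/2$, that $|\cS_j| \le c_1\,p_0\,C_{f,\cX}\,\gamma_{NT} + c_2\log(2J/\delta)$ for all $j$. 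Since $\gamma_{NT} \ge \gamma_1 > 0$ is bounded below by a constant depending only on $k$ and $\lambda$, the additive logarithmic term can be folded into a multiple of $\gamma_{NT}$; substituting the value of $p_0$ and $J$ and collecting the resulting poly-logarithmic factors yields a bound of the stated form $C_{f,\cX}\cdot(3 + \log(\log(\log N\log T)/\delta))\cdot\gamma_{NT}$ (with any remaining logarithmic dependence absorbed into the parenthetical factor and the constant into $C_{f,\cX}$). Intersecting this event with $\mathcal{E}$ gives the claim with probability at least $1-\delta$.

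I do not expect a serious obstacle: the probabilistically substantive step — controlling $\sigma_{j,\max}^2$ for uniformly sampled designs — is exactly Lemma~\ref{max_var_bound}, imported from~\citet{Sudeep_Uniform_Sampling}. What remains is bookkeeping, the main delicate points being (i) the conditioning argument that turns $|\cS_j|$ into a binomial with a deterministic parameter once $\cX_j$ and $\cD_j$ are fixed, (ii) verifying $p_j \le 1$ so the construction is well posed, and (iii) keeping the union bound over epochs cheap via Lemma~\ref{lemma:epoch_number}, so that the confidence loss enters only through a $\log\log$ term inside the logarithm rather than as a polynomial factor.
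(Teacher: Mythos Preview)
Your proposal is correct and follows essentially the same approach as the paper's proof: both recognize that $|\cS_j|$ is a (conditional) binomial with mean $NT_j p_j = p_0\,NT_j\,\sigma_{j,\max}^2$, apply a multiplicative Chernoff bound, control the mean via Lemma~\ref{max_var_bound}, and union-bound over the $O(\log\log NT)$ epochs from Lemma~\ref{lemma:epoch_number}. The paper is less explicit than you are about the conditioning on $\mathcal{F}_j$ and about checking $p_j \le 1$, and it uses the specific multiplicative Chernoff form with $\varepsilon = 2 + \log(1/\delta')$ (so the $(3+\log(\cdot))$ factor appears directly rather than after folding an additive $\log$ term), but the substance of the argument is the same.
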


The bound on the communication cost follows directly from Lemmas~\ref{lemma:inducing_set_size} and~\ref{lemma:epoch_number}. Please refer to Appendix~\ref{sec: Appendix A} for a detailed proof.

\end{proof}

\begin{figure*}[h]
\centering
\subfloat[$h_1(x)$]{\label{fig:cosine_regret}\centering \includegraphics[scale = 0.2]{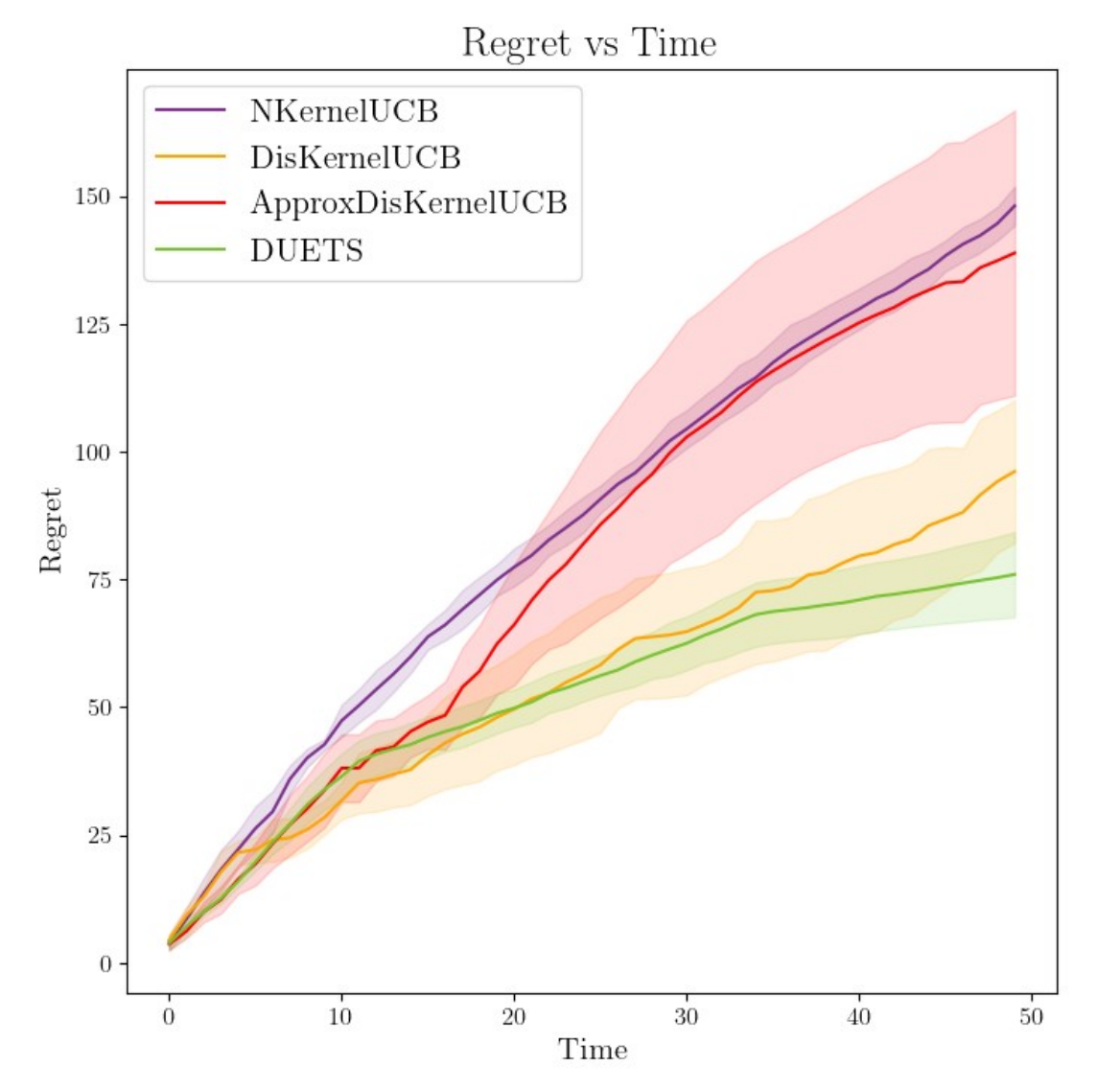}}
~
\subfloat[$h_2(x)$]{\label{fig:polynomial_regret}\centering \includegraphics[scale = 0.2]{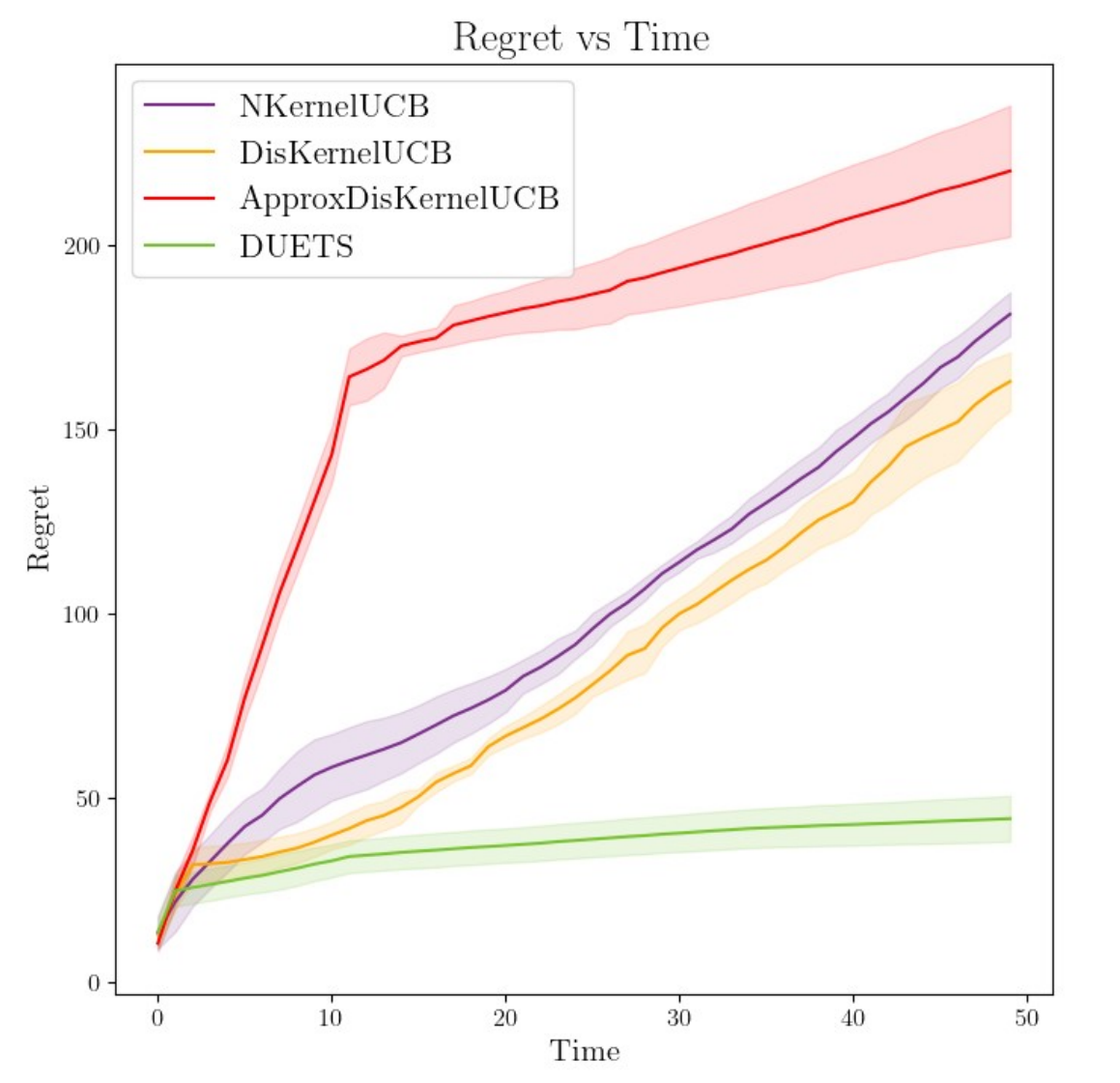}}
~
\subfloat[Branin]{\label{fig:branin_regret}\centering \includegraphics[scale = 0.2]{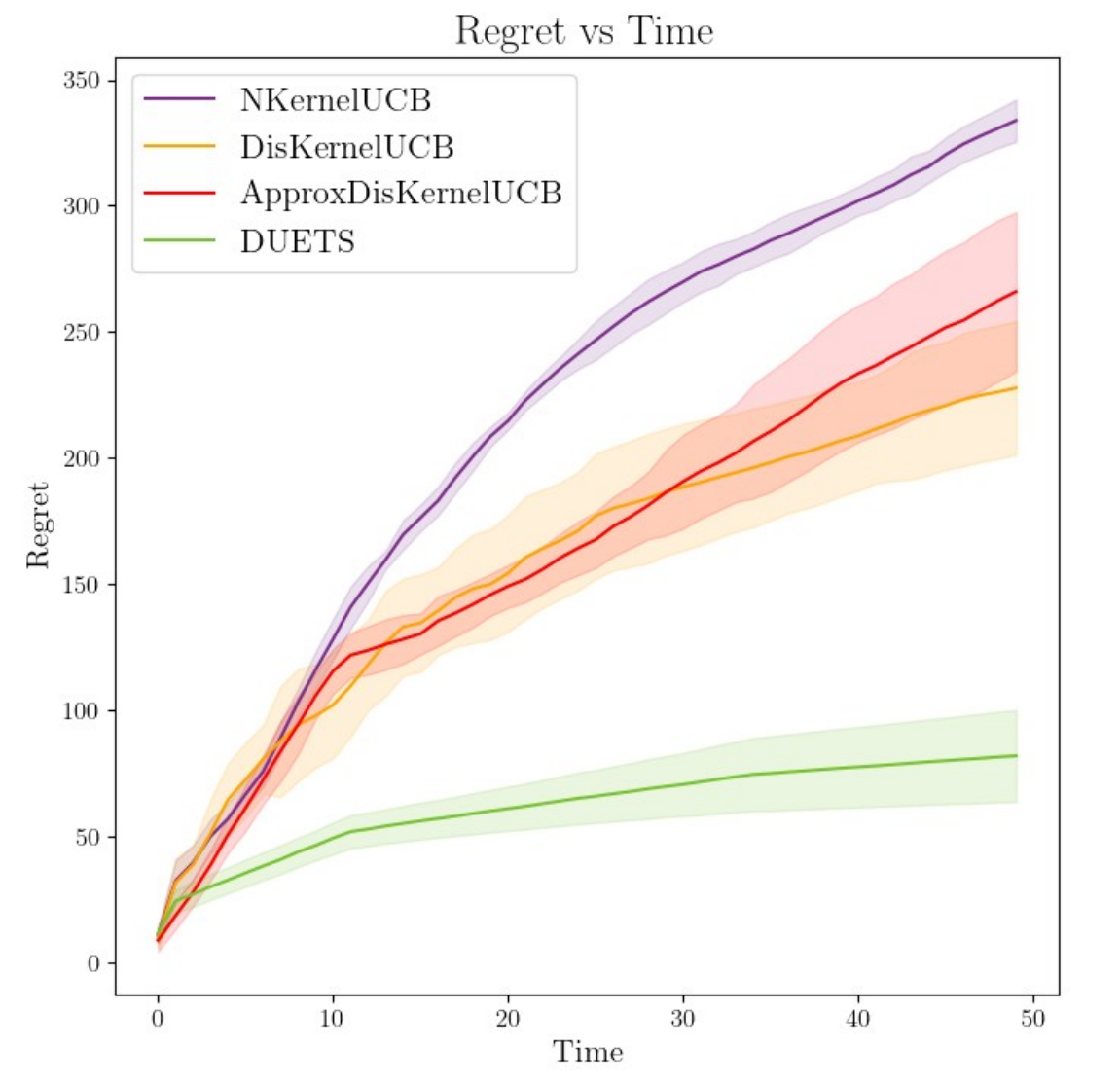}}
~
\subfloat[Hartmann-$4$D]{\label{fig:hartmann_regret}\centering \includegraphics[scale = 0.2]{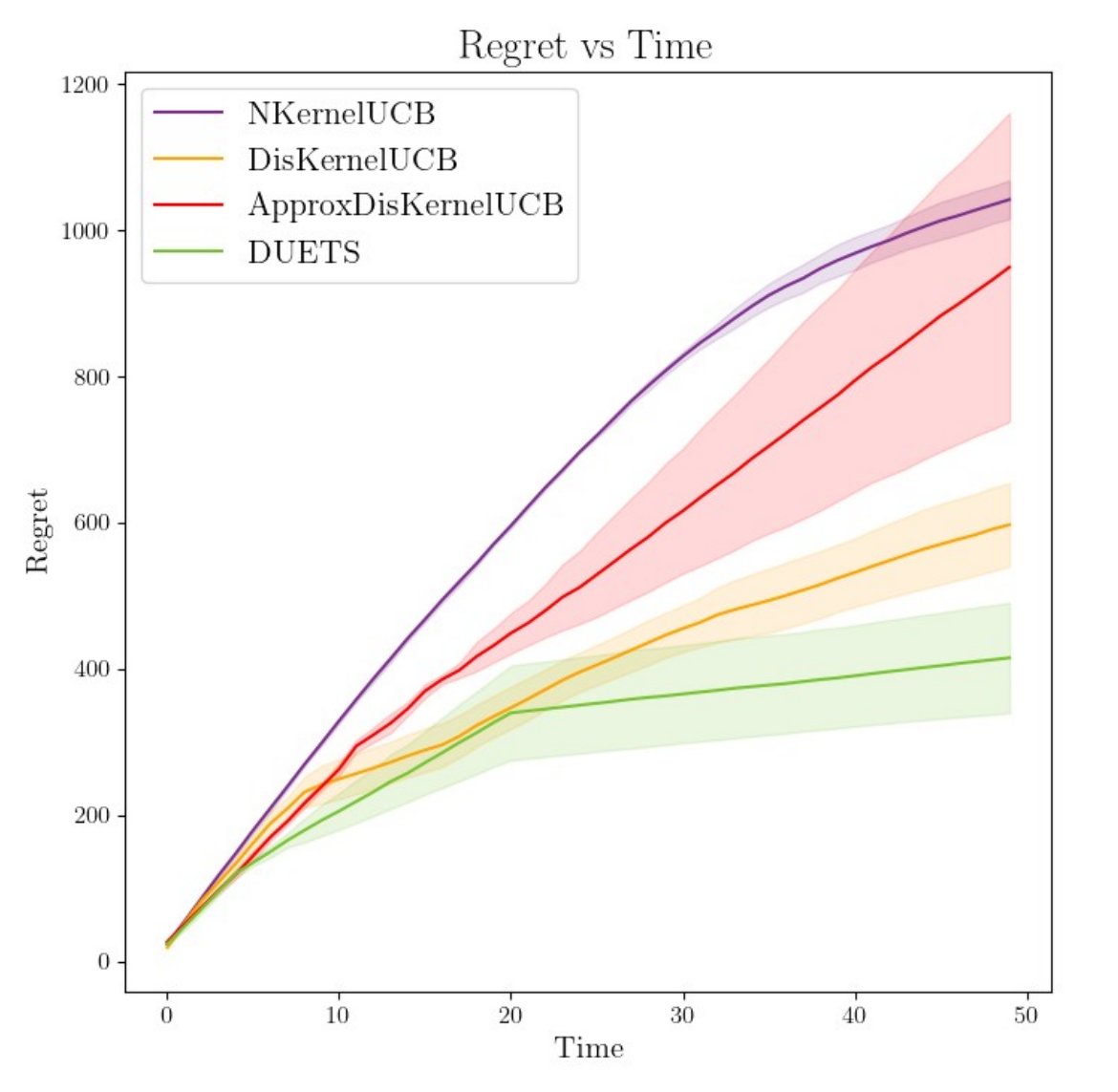}}

\subfloat[$h_1(x)$]{\label{fig:cosine_comm}\centering \includegraphics[scale = 0.2]{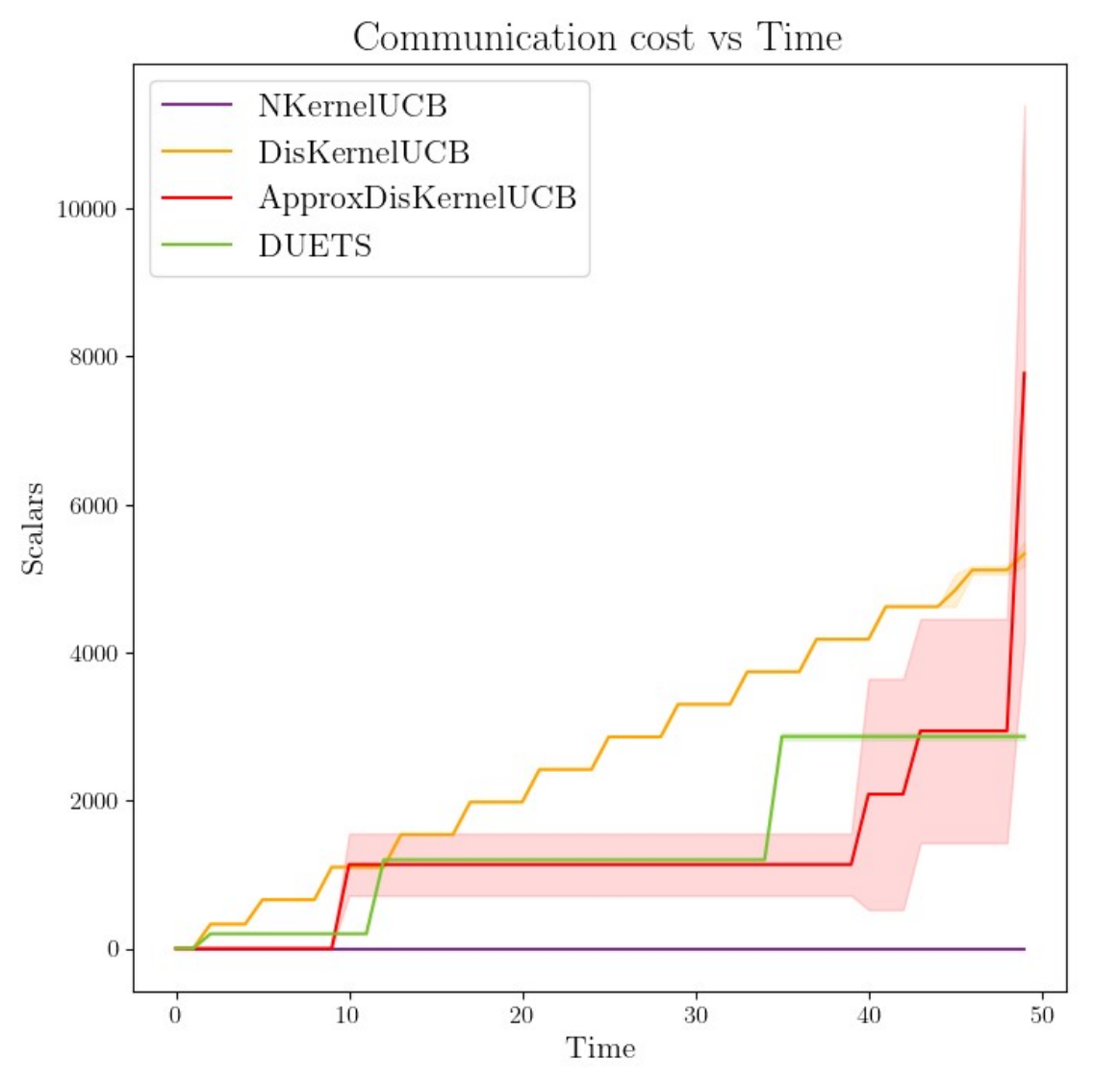}}
~
\subfloat[$h_2(x)$]{\label{fig:polynomial_comm}\centering \includegraphics[scale = 0.2]{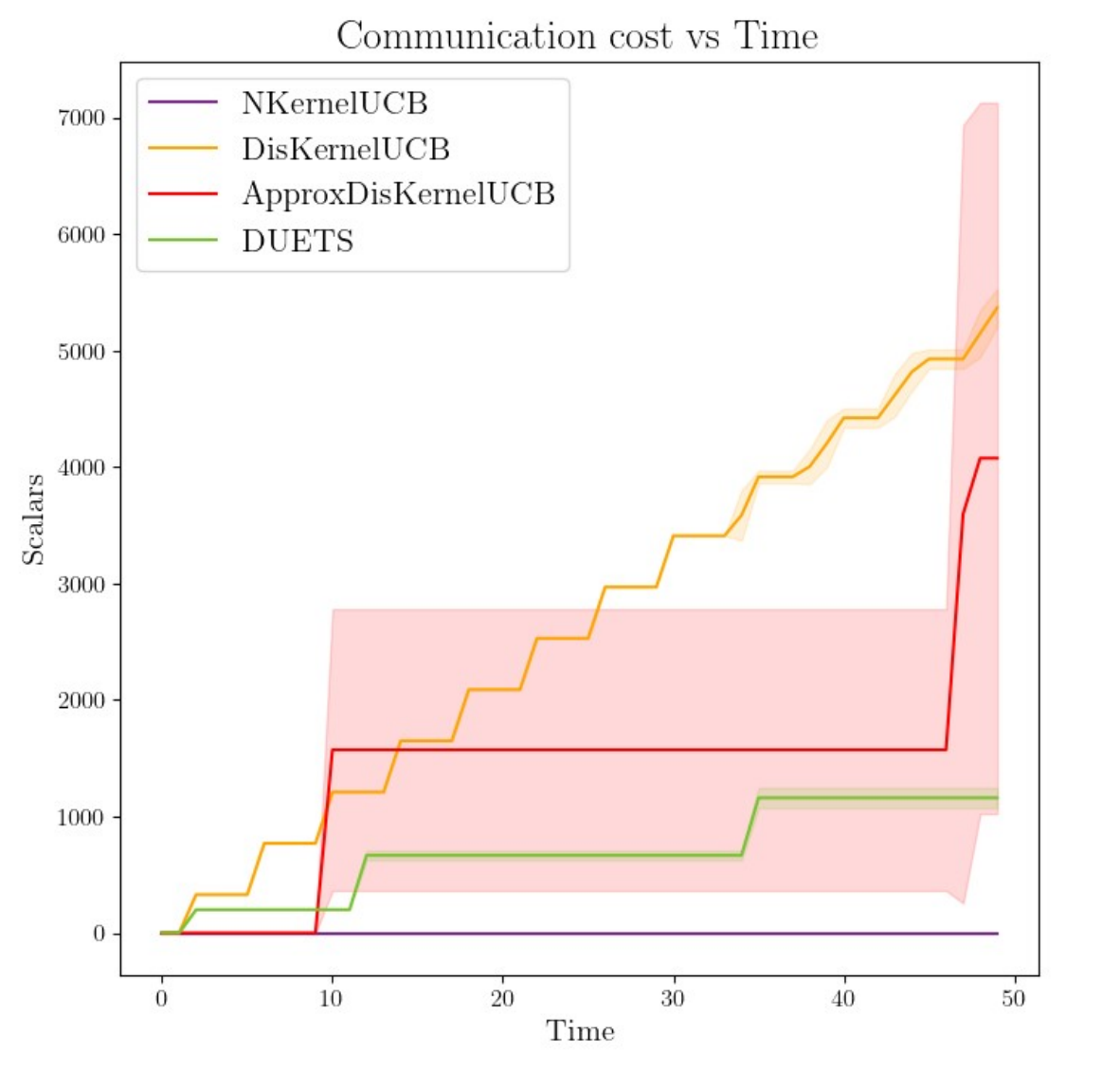}}
~
\subfloat[Branin]{\label{fig:branin_comm}\centering \includegraphics[scale = 0.2]{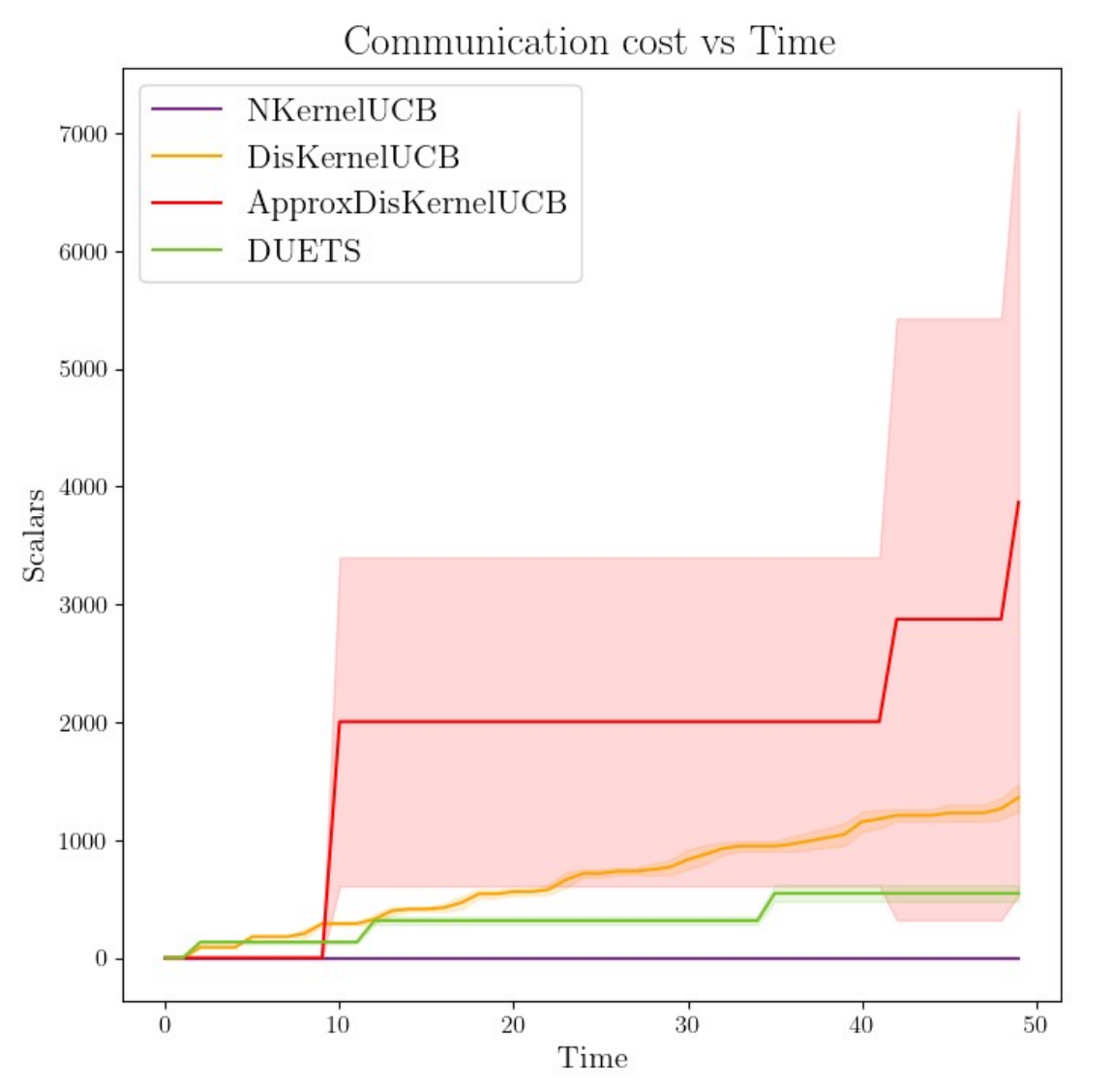}}
~
\subfloat[Hartmann-$4$D]{\label{fig:hartmann_comm}\centering \includegraphics[scale = 0.2]{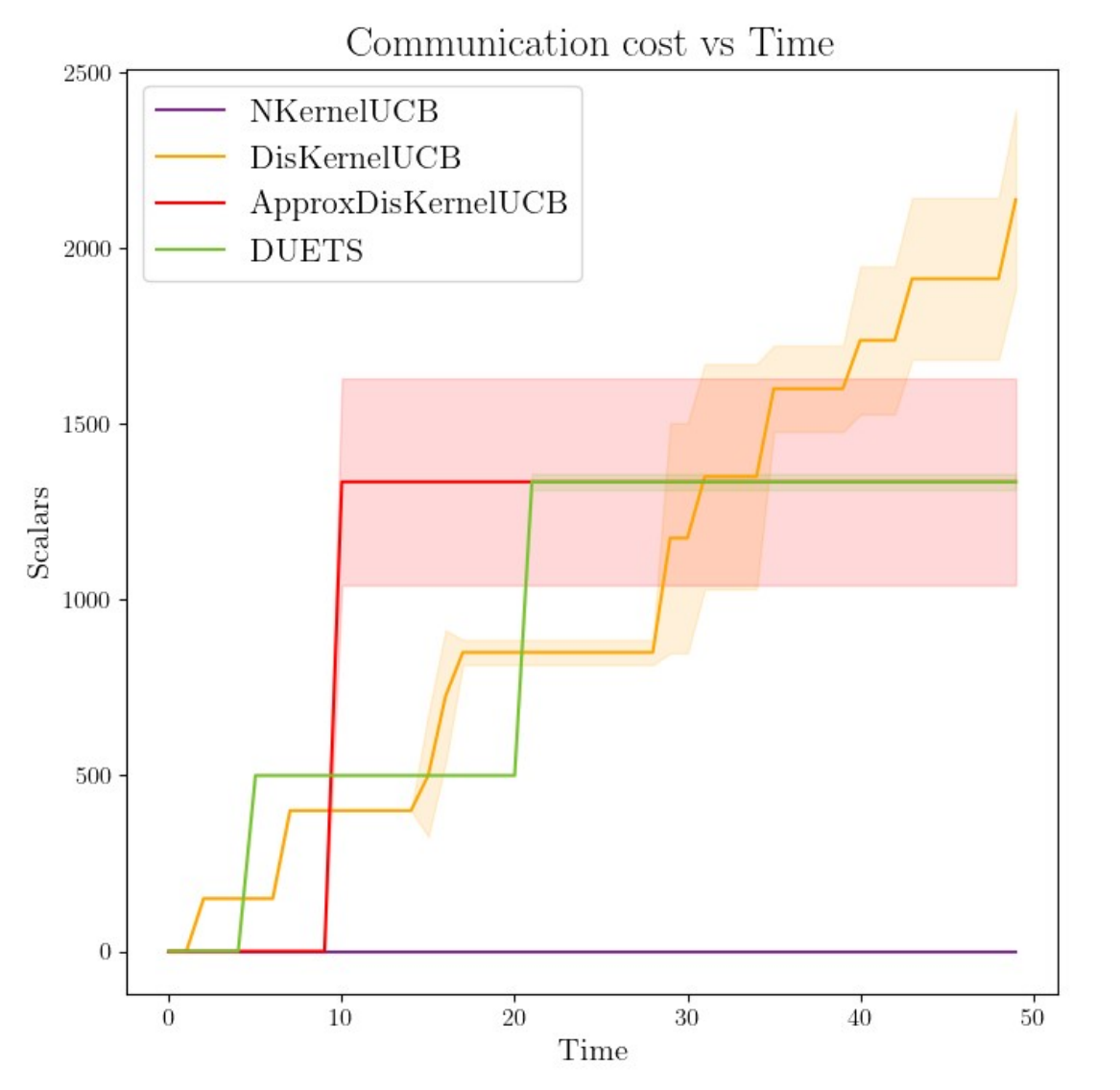}}
\caption{ Cumulative regret (Fig. (\ref{fig:cosine_regret}-\ref{fig:hartmann_regret}) and communication cost (\ref{fig:cosine_comm}-\ref{fig:hartmann_comm}) for all algorithms across different benchmark functions averaged over $5$ Monte Carlo runs. The shaded region represents error bars corresponding to one standard deviation. As seen from the above plots, \DUETS obtains a superior performance, both in terms of regret and communication cost, over other algorithm across all functions.}
\label{fig:plots_all}
\end{figure*}

\section{Empirical Studies}

We perform several empirical studies to corroborate our theoretical findings. We compare the regret performance and communication cost of our proposed algorithm, \textsc{DUETS}, against three baseline algorithms --- DisKernelUCB, ApproxDisKernelUCB and N-KernelUCB. The first two are distributed kernel bandits algorithms proposed in~\citet{Approx_Dis}. N-KernelUCB is a baseline algorithm considered in~\citet{Approx_Dis} where each agent locally runs the GP-UCB algorithm~\citep{Gopalan} with no communication among the agents. \\

We compare the performance of all the four algorithm across four benchmark functions. The first two are synthetic functions $h_1, h_2: \cB \to \R$ considered in~\citet{Approx_Dis}, where $\cB$ denotes the unit ball centered at origin in $\R^{10}$. The functions are given by:
\begin{align*}
    h_1(x) & := \cos(3x^{\top} \theta^{\star}) \\
    h_2(x) & := (x^{\top} \theta^{\star})^{3} - 3(x^{\top} \theta^{\star})^2 + 3(x^{\top} \theta^{\star}) + 3.
\end{align*}
For both the functions $\theta^{\star}$ is randomly chosen from the surface of the unit ball $\cB$. The other two functions are Branin~\citep{Azimi2012Branin, Picheny2013Branin} and Hartmann-$4$D~\citep{Picheny2013Branin}, which are commonly used benchmark functions for Bayesian Optimization. The Branin function is defined over $\cX = [0,1]^2$ while the Hartmann-$4$D function is defined over $\cX = [0,1]^{4}$. \\

We consider a distributed kernel bandit described in Section~\ref{sec:problem_formulation} with $N = 10$ agents. For all the experiments, we use the Squared Exponential kernel. The length scale was set to $0.2$ for Branin and to $1$ for all other functions. The observations were corrupted with zero mean Gaussian noise with a standard deviation of $0.2$. The parameter $D$ for ApproxDisKernelUCB and DisKernelUCB was set to $20$ and $10$ respectively. For \DUETS, we set $T_1 = 2$ and $p_0 = 10$. The parameter $\beta$ was selected using a grid search over $\{0.2, 0.5, 1, 2, 5\}$ for all the algorithms. All the algorithms were run for $T = 50$ time steps. We averaged the cumulative regret and the communication cost incurred by different algorithms over $5$ Monte Carlo runs. \\

The cumulative regret incurred by different algorithms across the different benchmark function are shown in the top row of Figure~\ref{fig:plots_all}. The bottom row consists of the corresponding plots for the communication cost incurred by the different algorithm. The shaded regions denotes error bars upto standard deviation on either side of the mean value. As evident from the plots, \DUETS achieves a significantly lower regret as compared to all other algorithms consistently across benchmark functions. \DUETS also incurs a smaller communication overhead as compared to other algorithms, corroborating our theoretical results.

\bibliography{sample}

\begin{thebibliography}{38}
\providecommand{\natexlab}[1]{#1}
\providecommand{\url}[1]{\texttt{#1}}
\expandafter\ifx\csname urlstyle\endcsname\relax
  \providecommand{\doi}[1]{doi: #1}\else
  \providecommand{\doi}{doi: \begingroup \urlstyle{rm}\Url}\fi

\bibitem[Abbasi-Yadkori et~al.(2011)Abbasi-Yadkori, P{\'{a}}l, and
  Szepesv{\'{a}}ri]{yadkori_Linear}
Y.~Abbasi-Yadkori, D.~P{\'{a}}l, and C.~Szepesv{\'{a}}ri.
\newblock {Improved algorithms for linear stochastic bandits}.
\newblock In \emph{Proceedings of the 25th Annual Conference on Neural
  Information Processing Systems}, 2011.
\newblock ISBN 9781618395993.

\bibitem[Amani et~al.(2022)Amani, Lattimore, Gy{\"{o}}rgy, and Yang]{Amani2022}
S.~Amani, T.~Lattimore, A.~Gy{\"{o}}rgy, and L.~F. Yang.
\newblock {Distributed Contextual Linear Bandits with Minimax Optimal
  Communication Cost}, 2022.
\newblock URL \url{http://arxiv.org/abs/2205.13170}.

\bibitem[Azimi et~al.(2012)Azimi, Jalali, and Fern]{Azimi2012Branin}
J.~Azimi, A.~Jalali, and X.~Z. Fern.
\newblock {Hybrid batch bayesian optimization}.
\newblock In \emph{Proceedings of the 29th International Conference on Machine
  Learning, ICML}, volume~2, pages 1215--1222, 2012.
\newblock ISBN 9781450312851.

\bibitem[Calandriello et~al.(2019)Calandriello, Carratino, Lazaric, Valko, and
  Rosasco]{Calandriello_Sketching}
D.~Calandriello, L.~Carratino, A.~Lazaric, M.~Valko, and L.~Rosasco.
\newblock {Gaussian Process Optimization with Adaptive Sketching: Scalable and
  No Regret}.
\newblock \emph{Proceedings of Machine Learning Research}, 99:\penalty0 1--25,
  2019.

\bibitem[Camilleri et~al.(2021)Camilleri, Katz-Samuels, and
  Jamieson]{Camilleri2021RIPS}
R.~Camilleri, J.~Katz-Samuels, and K.~Jamieson.
\newblock {High-Dimensional Experimental Design and Kernel Bandits}.
\newblock In \emph{Proceedings of the 38th International Conference on Machine
  Learning}, 2021.

\bibitem[Chawla et~al.(2020)Chawla, Sankararaman, Ganesh, and
  Shakkottai]{Chawla2020gossip}
R.~Chawla, A.~Sankararaman, A.~Ganesh, and S.~Shakkottai.
\newblock {The Gossiping Insert-Eliminate Algorithm for Multi-Agent Bandits},
  2020.

\bibitem[Chowdhury and Gopalan(2017)]{Gopalan}
S.~R. Chowdhury and A.~Gopalan.
\newblock {On kernelized multi-armed bandits}.
\newblock In \emph{Proceedings of the 34th International Conference on Machine
  Learning, ICML}, volume~2, pages 1397--1422, 2017.

\bibitem[Dai et~al.(2020)Dai, Low, and Jaillet]{Dai2020FBO}
Z.~Dai, B.~K.~H. Low, and P.~Jaillet.
\newblock {Federated Bayesian optimization via Thompson sampling}.
\newblock In \emph{Proceedings of the 34th Annual Conference on Neural
  Information Processing Systems}, volume 2020-Decem, 2020.

\bibitem[Du et~al.(2023)Du, Chen, Kuroki, and Huang]{Distributed_Simple_Regret}
Y.~Du, W.~Chen, Y.~Kuroki, and L.~Huang.
\newblock {Collaborative Pure Exploration in Kernel Bandit}.
\newblock In \emph{Proceedings of the 11th International Conference on Learning
  Representations, ICLR}, 2023.

\bibitem[Dubey and Pentland(2020)]{Dubey_Kernel_Distributed}
A.~Dubey and A.~Pentland.
\newblock {Kernel methods for cooperative multi-agent contextual bandits}.
\newblock In \emph{Proceedings of the 37th International Conference on Machine
  Learning, ICML 2020}, pages 2720--2730, 2020.
\newblock ISBN 9781713821120.

\bibitem[Ghosh et~al.(2021)Ghosh, Sankararaman, and Ramchandran]{Ghosh2021}
A.~Ghosh, A.~Sankararaman, and K.~Ramchandran.
\newblock {Adaptive Clustering and Personalization in Multi-Agent Stochastic
  Linear Bandits}, 2021.
\newblock URL \url{http://arxiv.org/abs/2106.08902}.

\bibitem[Hanna et~al.(2022)Hanna, Yang, and
  Fragouli]{Hanna2022ContextualLinearBandits}
O.~Hanna, L.~Yang, and C.~Fragouli.
\newblock Learning from distributed users in contextual linear bandits without
  sharing the context.
\newblock In \emph{Proceedings of the 36th Annual Conference on Neural
  Information Processing Systems}, volume~35, pages 11049--11062, 2022.

\bibitem[Huang et~al.(2021)Huang, Wu, Yang, and Shen]{Huang2021}
R.~Huang, W.~Wu, J.~Yang, and C.~Shen.
\newblock {Federated Linear Contextual Bandits}.
\newblock In \emph{Advances in Neural Information Processing Systems},
  volume~32, pages 27057--27068, 2021.
\newblock ISBN 9781713845393.

\bibitem[Jacot et~al.(2018)Jacot, Gabriel, and Hongler]{jacot2018neural}
A.~Jacot, F.~Gabriel, and C.~Hongler.
\newblock {Neural tangent kernel: Convergence and generalization in neural
  networks}.
\newblock In \emph{Proceedings of the 32nd Annual Conference on Neural
  Information Processing Systems}, pages 8571--8580, 2018.

\bibitem[Korda et~al.(2016)Korda, Szorenyi, and Li]{Korda2016}
N.~Korda, B.~Szorenyi, and S.~Li.
\newblock {Distributed clustering of linear bandits in peer to peer networks}.
\newblock In \emph{33rd International Conference on Machine Learning, ICML
  2016}, volume~3, pages 1966--1980, 2016.
\newblock ISBN 9781510829008.

\bibitem[Landgren et~al.(2017)Landgren, Srivastava, and Leonard]{Landgren2017}
P.~Landgren, V.~Srivastava, and N.~E. Leonard.
\newblock {On distributed cooperative decision-making in multiarmed bandits}.
\newblock In \emph{Proceedings of the European Control Conference, ECC}, pages
  243--248, 2017.
\newblock ISBN 9781509025916.

\bibitem[Li et~al.(2022)Li, Wang, Wang, and Wang]{Approx_Dis}
C.~Li, H.~Wang, M.~Wang, and H.~Wang.
\newblock {Communication Efficient Distributed Learning for Kernelized
  Contextual Bandits}.
\newblock In \emph{Proceedings of the 36th Annual Conference on Neural
  Information Processing Systems}, 2022.

\bibitem[Li and Scarlett(2022)]{Batched_Communication}
Z.~Li and J.~Scarlett.
\newblock Gaussian process bandit optimization with few batches.
\newblock In \emph{Proceedings of the 25th International Conference on
  Artificial Intelligence and Statistics, AISTATS}, 2022.

\bibitem[Mitra et~al.(2021)Mitra, Hassani, and Pappas]{Mitra2021}
A.~Mitra, H.~Hassani, and G.~Pappas.
\newblock {Exploiting Heterogeneity in Robust Federated Best-Arm
  Identification}, 2021.
\newblock URL \url{http://arxiv.org/abs/2109.05700}.

\bibitem[Mitra et~al.(2022)Mitra, Hassani, and Pappas]{Mitra2022bitconstrained}
A.~Mitra, H.~Hassani, and G.~J. Pappas.
\newblock {Linear Stochastic Bandits over a Bit-Constrained Channel}, 2022.

\bibitem[Picheny et~al.(2013)Picheny, Wagner, and
  Ginsbourger]{Picheny2013Branin}
V.~Picheny, T.~Wagner, and D.~Ginsbourger.
\newblock {A benchmark of kriging-based infill criteria for noisy
  optimization}.
\newblock \emph{Structural and Multidisciplinary Optimization}, 48\penalty0
  (3):\penalty0 607--626, 2013.
\newblock ISSN 1615147X.

\bibitem[Salgia and Zhao(2023)]{Sudeep_Linear}
S.~Salgia and Q.~Zhao.
\newblock Distributed linear bandits under communication constraints.
\newblock In \emph{Proceedings of the 40th International Conference on Machine
  Learning, ICML}, pages 29845--29875. PMLR, 2023.

\bibitem[Salgia et~al.(2021)Salgia, Vakili, and Zhao]{GP_ThreDS}
S.~Salgia, S.~Vakili, and Q.~Zhao.
\newblock A domain-shrinking based {B}ayesian optimization algorithm with
  order-optimal regret performance.
\newblock In \emph{Proceedings of the 35th Annual Conference on Neural
  Information Processing Systems}, volume~34, 2021.

\bibitem[Salgia et~al.(2023{\natexlab{a}})Salgia, Vakili, and
  Zhao]{Sudeep_Uniform_Sampling}
S.~Salgia, S.~Vakili, and Q.~Zhao.
\newblock Random exploration in bayesian optimization: Order-optimal regret and
  computational efficiency, 2023{\natexlab{a}}.

\bibitem[Salgia et~al.(2023{\natexlab{b}})Salgia, Vakili, and
  Zhao]{Vakili_Suddep_Sketching}
S.~Salgia, S.~Vakili, and Q.~Zhao.
\newblock Collaborative learning in kernel-based bandits for distributed users.
\newblock \emph{IEEE Transactions on Signal Processing}, 71:\penalty0
  3956--3967, 2023{\natexlab{b}}.

\bibitem[Sankararaman et~al.(2019)Sankararaman, Ganesh, and
  Shakkottai]{Sankararaman2019}
A.~Sankararaman, A.~Ganesh, and S.~Shakkottai.
\newblock Social learning in multi agent multi armed bandits.
\newblock \emph{Proc. ACM Meas. Anal. Comput. Syst.}, 3\penalty0 (3), dec 2019.

\bibitem[Scarlett et~al.(2017)Scarlett, Bogunovic, and
  Cehver]{Scarlett2017LowerBoundGP}
J.~Scarlett, I.~Bogunovic, and V.~Cehver.
\newblock {Lower Bounds on Regret for Noisy Gaussian Process Bandit
  Optimization}.
\newblock In \emph{Conference on Learning Theory}, volume~65, pages 1--20,
  2017.

\bibitem[Shahrampour et~al.(2017)Shahrampour, Rakhlin, and
  Jadbabaie]{Shahrampour2017}
S.~Shahrampour, A.~Rakhlin, and A.~Jadbabaie.
\newblock Multi-armed bandits in multi-agent networks.
\newblock In \emph{2017 IEEE International Conference on Acoustics, Speech and
  Signal Processing (ICASSP)}, pages 2786--2790, 2017.

\bibitem[Shi and Shen(2021)]{Fed_MAB}
C.~Shi and C.~Shen.
\newblock {Federated Multi-Armed Bandits}.
\newblock In \emph{Proceedings of the 35th AAAI Conference on Artificial
  Intelligence}, pages 9603--9611, 2021.

\bibitem[Shi et~al.(2021)Shi, Shen, and Yang]{Shi2021}
C.~Shi, C.~Shen, and J.~Yang.
\newblock {Federated Multi-armed Bandits with Personalization}, 2021.
\newblock URL \url{http://arxiv.org/abs/2102.13101}.

\bibitem[Srinivas et~al.(2010)Srinivas, Krause, Kakade, and Seeger]{Srinivas}
N.~Srinivas, A.~Krause, S.~Kakade, and M.~Seeger.
\newblock Gaussian process optimization in the bandit setting: no regret and
  experimental design.
\newblock In \emph{Proceedings of the 27th International Conference on Machine
  Learning, ICML}, pages 1015--1022, 2010.

\bibitem[Vakili et~al.(2021{\natexlab{a}})Vakili, Bouziani, Jalali, Bernacchia,
  and Shiu]{Vakili_Kernel_Simple_Regret}
S.~Vakili, N.~Bouziani, S.~Jalali, A.~Bernacchia, and D.-s. Shiu.
\newblock Optimal order simple regret for {G}aussian process bandits.
\newblock In \emph{Proceedings of the 35th Annual Conference on Neural
  Information Processing Systems}, 2021{\natexlab{a}}.

\bibitem[Vakili et~al.(2021{\natexlab{b}})Vakili, Khezeli, and
  Picheny]{information_gain_bound}
S.~Vakili, K.~Khezeli, and V.~Picheny.
\newblock {On information gain and regret bounds in Gaussian process bandits}.
\newblock In \emph{Proceedings of the 24th International Conference on
  Artificial Intelligence and Statistics, AISTATS}, 2021{\natexlab{b}}.

\bibitem[Vakili et~al.(2022)Vakili, Scarlett, Shiu, and
  Bernacchia]{Vakili_Aprox_Conv}
S.~Vakili, J.~Scarlett, D.-s. Shiu, and A.~Bernacchia.
\newblock Improved convergence rates for sparse approximation methods in
  kernel-based learning.
\newblock In \emph{Proceedings of the 39th International Conference on Machine
  Learning, ICML}, pages 21960--21983. PMLR, 2022.

\bibitem[Valko et~al.(2013)Valko, Korda, Munos, Flaounas, and
  Cristianini]{Valko_Kernel}
M.~Valko, N.~Korda, R.~Munos, I.~Flaounas, and N.~Cristianini.
\newblock {Finite-time analysis of kernelised contextual bandits}.
\newblock In \emph{Proceedings of the 29th Conference on Uncertainty in
  Artificial Intelligence, UAI}, pages 654--663, 2013.

\bibitem[Wang et~al.(2019)Wang, Hu, Chen, and Wang]{WANG_Delb_Demab}
Y.~Wang, J.~Hu, X.~Chen, and L.~Wang.
\newblock {Distributed Bandit Learning: Near-Optimal Regret with Efficient
  Communication}.
\newblock In \emph{Proceedings of the 7th International Conference on Learning
  Representations (ICLR)}, 2019.

\bibitem[Wild et~al.(2021)Wild, Kanagawa, and Sejdinovic]{GP_RKHS_Approx_Equiv}
V.~Wild, M.~Kanagawa, and D.~Sejdinovic.
\newblock Connections and equivalences between the nystr\"om method and sparse
  variational gaussian processes, 2021.

\bibitem[Zhu et~al.(2021)Zhu, Zhu, Liu, and Liu]{Zhu2021}
Z.~Zhu, J.~Zhu, J.~Liu, and Y.~Liu.
\newblock {Federated Bandit: A Gossiping Approach}.
\newblock \emph{Proceedings of the ACM on Measurement and Analysis of Computing
  Systems}, 5\penalty0 (1):\penalty0 1--29, 2021.

\end{thebibliography}
\bibliographystyle{abbrvnat}

\appendix

\section{Appendix A.}\label{sec: Appendix A}

\subsection{Proof of Theorem~\ref{Main_Theorem}}

\begin{proof}
\label{appendix:proof_of_main_theorem}
In this section, we provide a detailed proof of Theorem~\ref{Main_Theorem}. For the regret bound, we first bound the regret incurred by \DUETS in each epoch $j$ and then sum it across different epochs to obtain a bound on the overall cumulative regret. We first prove the theorem assuming the results from Lemmas~\ref{grid_bound},~\ref{max_var_bound} and~\ref{lemma:epoch_number} and then separately prove the lemmas. \\

Consider any epoch $j \geq 1$ and let $R^{(j)}$ denote the regret incurred by \DUETS in this epoch. Since the agents take purely exploratory actions by uniform sampling points from the current set, we have the following crude bound $R^{(j)} \leq \Delta_j \cdot NT_j \cdot M_f$, where $\Delta_j := \sup_{x \in \cX_{j}} (f(x^*) - f(x))$. The term $NT_j \cdot M_f$ corresponds to number of points sampled during the epoch as we sample each connected component of $\cX_j$, of which there are at most $M_f$, $NT_j$ times. For $j = 1$, we use the trivial bound, 
\begin{align*}
    \Delta_1 = \sup_{x \in \cX} (f(x^*) - f(x)) \leq 2 \sup_{x \in \cX} f(x) \leq 2B,
\end{align*}
which gives us $R^{(1)} \leq 2B \cdot NT_1 \cdot M_f$. On invoking Lemma~\ref{grid_bound} for $j>1$ we obtain,
\begin{align*}
    R^{(j)} & \leq \Delta_j \cdot NT_j \cdot M_f \\
    & \leq NT_j \cdot M_f \cdot \left(8\beta(\delta') \cdot \left( \sup_{x\in \mathcal{X}_{j-1}}\sigma_{j-1}(x) \right) +\frac{4B}{T} \right),
\end{align*}
where $\delta' = \dfrac{\delta}{{2(\log\log{NT}+4) |\mathcal{U}_{T}|}}$. Using Lemma~\ref{max_var_bound}, we can further bound this expression as 
\begin{align*}
    R^{(j)} & \leq \Delta_j \cdot NT_j \cdot M_f \\
    & \leq NT_j \cdot M_f \cdot \left(8\beta(\delta') \cdot C_{f, \cX} \cdot \sqrt{\frac{\gamma_{NT_{j-1}}}{NT_{j-1}}} +\frac{4B}{T} \right) \\
    & \leq M_f \cdot \left(8C_{f, \cX}^{1/2} \cdot \beta(\delta')  \cdot \sqrt{NT\gamma_{NT_{j-1}}} +\frac{4B NT_j}{T} \right) \\
    & \leq M_f \cdot \left(8C_{f, \cX}^{1/2} \cdot \beta(\delta')  \cdot \sqrt{NT\gamma_{NT}} +\frac{4B NT_j}{T} \right).
\end{align*}

In the third line, we used the inequality $\frac{T_j}{\sqrt{T_{j-1}}}\leq \sqrt{T}$, which follows from the definition of $T_j$. In the last line, we used the fact that $\gamma_{m}$ is an increasing function of $m$. Thus, if $J$ denotes an upper bound on the number of epochs, we can  write:
\begin{align}
    \sum_{j = 1}^J R^{(j)} & \leq 2B M_f \cdot NT_1 + \sum_{j = 2}^J M_f \cdot \left(8C_{f, \cX}^{1/2} \cdot \beta(\delta')  \cdot \sqrt{NT\gamma_{NT}} +\frac{4B NT_j}{T} \right) \nonumber \\
    & \leq 2B M_f \cdot NT_1 + J \cdot \left(8C_{f, \cX}' \cdot \beta(\delta')  \cdot \sqrt{NT\gamma_{NT}} \right) + \frac{4B N M_f}{T} \sum_{j = 1}^J T_j   \nonumber \\
    & \leq 2B M_f \cdot NT_1 + J \cdot \left(8C_{f, \cX}' \cdot \beta(\delta')  \cdot \sqrt{NT\gamma_{NT}} \right) + {4B N M_f}.\label{eqn:regret_analysis_last} 
\end{align}

We next optimize the length of the first epoch $T_1$ in order to achieve order optimal regret. \textsc{DUETS} achieves order optimal regret for $N\leq\max(T,\gamma_{NT})$.\\

If $N<T$ we can choose $T_1=\sqrt{\frac{T}{N}} +\overline{M}(\delta^{'})$ where $\delta^{'}=\frac{\delta}{2(\log\log{NT}+4)}$. Left hand side of equation (\ref{eqn:regret_analysis_last}) can now be written as $\widetilde{\mathcal{O}}(\sqrt{NT\gamma_{NT}}\beta(\delta^{'}))\equiv \widetilde{\mathcal{O}}\left(\sqrt{NT\gamma_{NT}}\left(\log{\frac{T}{\delta}}\right)\right)$.\\
If $N\leq\gamma_{NT}$ we can fix $T_1=\sqrt{T}+\overline{M}(\delta^{'})$. We have $ NT_1\leq  \widetilde{O}(\sqrt{NT\gamma_{NT}})$ and the left hand-side is once again $\widetilde{\mathcal{O}}\left(\sqrt{NT\gamma_{NT}}\left(\log{\frac{T}{\delta}}\right)\right)$.
\end{proof}

Note that by Lemma \ref{lemma:epoch_number}, $J$ is upper bounded by $\log(\log{N}\log{T})+4$ and is thus $\widetilde{O}(1)$. \\

Before moving onto the proofs of Lemmas~\ref{grid_bound} and~\ref{lemma:epoch_number}, we state two auxiliary lemmas that will be useful for our analysis.

\begin{definition}
    Let $\cD = \{x_1, x_2, \dots, x_m\} \subset \cX$ be a collection $m$ points and $\cS$ be any subset of $\cD$. Let $\sigma_{\cD}^2(\cdot)$ denote the posterior variance corresponding to the points in $\cD$ and $\tilde{\sigma}_{\cS}^2(\cdot)$ denote the \emph{approximate} posterior computed based on the points in $\cS$. We call $\cS$ to be an $\varepsilon$-accurate inducing set if the following relations are true for all $x \in \cX$.
    \begin{align*}
        \frac{1 - \varepsilon}{1 + \varepsilon} \cdot \tilde{\sigma}_{\cS}^2(x) \leq \sigma_{\cD}^2(x) \leq \frac{1 + \varepsilon}{1 - \varepsilon}  \cdot \tilde{\sigma}_{\cS}^2(x).
    \end{align*} 
\end{definition}

\begin{lemma}[Adapted from~\citet{Calandriello_Sketching}]
    Let $\cD = \{x_1, x_2, \dots, x_m\} \subset \cX$ be a collection $m$ points and $\cS$ be a random subset of $\cD$ constructed by including each point with probability $p$, independent of other points. Then $\cS$ is an $\varepsilon$-accurate inducing set with 
    probability $\displaystyle 1 - 4m\exp\left( -\frac{3p\varepsilon^2}{8\sigma_{\max}^2} \right)$, where $\sigma_{\max}^2 = \sup_{x \in \cX} \sigma_{\cD}^2(x)$.
    \label{lemma:calandriello_inducing_set}
\end{lemma}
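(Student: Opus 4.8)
The plan is to reduce the claim to a spectral‑approximation statement about uniform random subsampling of rank‑one operators and then apply an intrinsic‑dimension matrix Bernstein inequality; the worst‑case posterior variance $\sigma_{\max}^2$ will enter as a bound on the largest ridge leverage score of the set $\cD$.

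First I would pass to feature‑operator form. Writing $\phi(x) := k(x,\cdot) \in \cH_k$ for the canonical feature map and $A_\cD := \lambda I + \sum_{i \in \cD}\phi(x_i)\phi(x_i)^\top$, standard kernel identities (the push‑through / Woodbury identity together with the definition of $z_\cS$) give $\sigma_\cD^2(x) = \lambda\,\phi(x)^\top A_\cD^{-1}\phi(x)$ and rewrite the Nyström‑approximate variance of Eqn.~\eqref{eq:aprox_post_variance} as a quadratic form in $\phi(x)$ in which $A_\cD$ is replaced by a suitably normalized operator $\widehat{A}_\cS$ associated with $\cS$ --- built from the orthogonal projection onto $\mathrm{span}\{\phi(x_j):j\in\cS\}$, and represented for the analysis through the reweighted subsample $\widehat{A}_\cS = \lambda I + \tfrac{1}{p}\sum_{i\in\cS}\phi(x_i)\phi(x_i)^\top$, whose bookkeeping against Eqn.~\eqref{eq:aprox_post_variance} I would verify carefully. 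The payoff of this step is that $\varepsilon$‑accuracy of $\cS$ is implied by the single operator sandwich
\[
    (1-\varepsilon)\,I \ \preceq\ A_\cD^{-1/2}\,\widehat{A}_\cS\,A_\cD^{-1/2} \ \preceq\ (1+\varepsilon)\,I,
\]
since substituting it into the two quadratic forms above yields exactly $\tfrac{1-\varepsilon}{1+\varepsilon}\tilde\sigma_\cS^2(x) \le \sigma_\cD^2(x) \le \tfrac{1+\varepsilon}{1-\varepsilon}\tilde\sigma_\cS^2(x)$ for every $x \in \cX$.

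Second, I would establish the sandwich by concentration. Since $\mathbb{E}[\widehat{A}_\cS] = A_\cD$, the centered operator $Z := A_\cD^{-1/2}\widehat{A}_\cS A_\cD^{-1/2} - I = \sum_{i\in\cD}\big(\tfrac{\mathbbm{1}\{i\in\cS\}}{p}-1\big)A_\cD^{-1/2}\phi(x_i)\phi(x_i)^\top A_\cD^{-1/2}$ is a sum of $m$ independent, mean‑zero, self‑adjoint, rank‑one random operators --- one per point of $\cD$, nonzero exactly when that point is sampled. Each summand has operator norm at most $\tfrac{1}{p}\,\phi(x_i)^\top A_\cD^{-1}\phi(x_i) = \tfrac{\sigma_\cD^2(x_i)}{p\lambda} \le \tfrac{\sigma_{\max}^2}{p\lambda}$, which is the \emph{only} place $\sigma_{\max}^2$ enters, and a short computation bounds the matrix‑variance parameter by the same quantity. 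Applying the intrinsic‑dimension matrix Bernstein inequality --- intrinsic dimension at most $m$ (the rank of $Z$), per‑term and variance bound $\sigma_{\max}^2/(p\lambda)$, and deviation level $\varepsilon \le 1$ --- gives $\mathbb{P}[\|Z\| > \varepsilon] \le 4m\exp\!\big(-\tfrac{3p\varepsilon^2}{8\sigma_{\max}^2}\big)$, the stated failure probability (the $\tfrac{3}{8}$ and the scaling $p\varepsilon^2/\sigma_{\max}^2$ being precisely what matrix Bernstein delivers for $\varepsilon\le 1$); on the complementary event the sandwich holds, hence $\cS$ is $\varepsilon$‑accurate.

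The step requiring genuine care --- and the main obstacle --- is the first one: matching the specific Nyström expression in Eqn.~\eqref{eq:aprox_post_variance}, which mixes a projection‑residual contribution $\tfrac{1}{\lambda}\|(I-P_\cS)\phi(x)\|^2$ with an in‑subspace ridge term, to a clean quadratic form in a single operator $\widehat{A}_\cS$, and certifying that a two‑sided spectral approximation of $\widehat{A}_\cS$ is sufficient for $\varepsilon$‑accuracy; everything downstream is a routine matrix‑concentration computation. One conceptual point worth making explicit is that, because each point is kept with the \emph{uniform} probability $p$ rather than with probability proportional to its ridge leverage score $\sigma_\cD^2(x_i)/\lambda$, the argument must pay for the largest such score, bounded by $\sigma_{\max}^2/\lambda$ --- which is exactly why the worst‑case posterior variance $\sigma_{\max}^2 = \sup_{x\in\cX}\sigma_\cD^2(x)$ controls the exponent. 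As an alternative to this self‑contained route, the claim follows from the leverage‑score sketching analysis of~\citet{Calandriello_Sketching} once one checks that uniform inclusion with probability $p$ dominates their sampling rule whenever $p$ exceeds the maximal ridge leverage score of $\cD$.
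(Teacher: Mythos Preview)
The paper does not give its own proof of this lemma: it is stated as ``Adapted from~\citet{Calandriello_Sketching}'' and used as a black box (Lemma~\ref{lemma:accuracy_of_S_j} immediately invokes it with no further argument). So there is no in-paper proof to compare against; what you have written is essentially a sketch of the argument from the cited reference --- pass to feature space, reduce $\varepsilon$-accuracy to a two-sided operator inequality $(1-\varepsilon)I \preceq A_\cD^{-1/2}\widehat A_\cS A_\cD^{-1/2} \preceq (1+\varepsilon)I$, and control the deviation via matrix Bernstein with the ridge leverage scores bounded uniformly by $\sigma_{\max}^2/\lambda$.

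Your outline is sound and you have correctly flagged the one genuinely delicate point: the Nystr\"om expression in Eqn.~\eqref{eq:aprox_post_variance} is the \emph{projection-based} approximation (no $1/p$ reweighting), whereas the concentration argument is most naturally phrased for the \emph{reweighted} subsample $\widehat A_\cS = \lambda I + \tfrac{1}{p}\sum_{i\in\cS}\phi(x_i)\phi(x_i)^\top$. Bridging these two --- showing that the spectral sandwich for $\widehat A_\cS$ transfers to the projection-based $\tilde\sigma_\cS^2$ --- is exactly the content of the Calandriello et al.\ analysis, and is not quite as automatic as the rest of the sketch suggests; you would need to invoke their decomposition (projection residual plus in-subspace ridge term) rather than simply ``verify bookkeeping.'' One minor bookkeeping point: with the paper's normalization $\sigma_\cD^2(x) = \lambda\,\phi(x)^\top A_\cD^{-1}\phi(x)$, the per-term and variance bounds in Bernstein carry a factor of $\lambda$, so the exponent you derive is $-3p\lambda\varepsilon^2/(8\sigma_{\max}^2)$; the stated lemma presumably absorbs $\lambda$ into the constant or takes $\lambda=1$, which is worth noting but does not affect how the lemma is used downstream.
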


\begin{lemma}
    Let \DUETS be run with a choice of $p_0 = 72 \log(4NT/\delta')$. Then, for all epochs $j \geq 1$, the global inducing set $\cS_j$ is $0.5$-accurate with probability $1- \delta$.
    \label{lemma:accuracy_of_S_j}
\end{lemma}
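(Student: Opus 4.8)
The plan is to derive Lemma~\ref{lemma:accuracy_of_S_j} from Lemma~\ref{lemma:calandriello_inducing_set} by a union bound over epochs, being careful that the sampling probability $p_j = p_0 \sigma_{j,\max}^2$ is itself data-dependent. First I would fix an epoch $j$ and condition on the set $\cD_j$ (equivalently, on the coins $\mathscr{C}_1,\dots,\mathscr{C}_N$), so that $\sigma_{j,\max}^2 = \sup_{x \in \cX_j}\sigma_j^2(x)$ and $m_j := |\cD_j| \le NT_j \le NT$ are fixed quantities. Conditioned on this, the inducing set $\cS_j$ is formed by including each point of $\cD_j$ independently with probability $p_j$, which is exactly the setup of Lemma~\ref{lemma:calandriello_inducing_set} with $p = p_j$, $m = m_j$, and worst-case posterior variance $\sigma_{\max}^2 = \sigma_{j,\max}^2$. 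Applying that lemma with $\varepsilon = 1/2$ gives that $\cS_j$ is $0.5$-accurate with conditional probability at least
\begin{align*}
1 - 4 m_j \exp\!\left( -\frac{3 p_j (1/2)^2}{8\sigma_{j,\max}^2} \right)
= 1 - 4 m_j \exp\!\left( -\frac{3 p_0 \sigma_{j,\max}^2}{32 \sigma_{j,\max}^2} \right)
= 1 - 4 m_j \exp\!\left( -\frac{3 p_0}{32} \right).
\end{align*}
The crucial cancellation is that $\sigma_{j,\max}^2$ appears both in the numerator (through $p_j = p_0\sigma_{j,\max}^2$) and in the denominator of the exponent, so the failure probability no longer depends on the realized value of $\sigma_{j,\max}^2$ and the bound holds uniformly over all realizations of $\cD_j$; hence the same bound holds unconditionally for each fixed $j$.

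Next I would plug in $p_0 = 72\log(4NT/\delta')$. Then $\exp(-3p_0/32) = \exp\!\big(-\tfrac{27}{4}\log(4NT/\delta')\big) = (4NT/\delta')^{-27/4} \le \delta'/(4NT)$, so using $m_j \le NT$ the per-epoch failure probability is at most $4 m_j \cdot \delta'/(4NT) \le \delta'$. (Here I am reading the lemma statement's ``$p_0 = 72\log(4NT/\delta')$'' as the operative choice; the theorem statement's $p_0 = 72\log(4N/\delta)$ differs only by which polylog factors are absorbed, and the argument is identical — I would note that the same computation with any $p_0 \ge \tfrac{32}{3}\log(4NT/\delta')$ suffices, which covers both.) Finally I would take a union bound over all epochs. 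By Lemma~\ref{lemma:epoch_number} the number of epochs $J$ is at most $\log(\log N \log T) + 4$, so the total failure probability is at most $J\delta'$; choosing $\delta' = \delta / (\log(\log N \log T) + 4)$ — which is consistent with the definition of $\delta'$ used elsewhere in the algorithm, up to the extra $|\cU_T|$ factor that only makes $\delta'$ smaller and the bound stronger — gives that $\cS_j$ is $0.5$-accurate simultaneously for all epochs $j \ge 1$ with probability at least $1 - \delta$.

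The only real subtlety, and the step I would be most careful about, is the data-dependence of $p_j$: a naive application of Lemma~\ref{lemma:calandriello_inducing_set} with a random $p$ is not valid, so the argument must explicitly condition on $\cD_j$ first, observe that $p_j$ is then deterministic, and exploit the exact cancellation of $\sigma_{j,\max}^2$ in the exponent. A secondary point to verify is that $m_j \le NT$ uniformly (which follows since $\sum_j T_j$ exhausts the horizon, so $T_j \le T$ always), and that the $\varepsilon$-accuracy guarantee of Lemma~\ref{lemma:calandriello_inducing_set} is stated for all $x \in \cX$ (not just $x \in \cD_j$), which is what downstream lemmas such as Lemma~\ref{grid_bound} require when they compare $\tilde\sigma_j$ and $\sigma_j$ on the trimmed domain $\cX_j$. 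Modulo these checks, the proof is a direct union bound and the constant $72$ is chosen precisely to make $3p_0/32$ large enough to beat the $\log(4NT/\delta')$ budget.
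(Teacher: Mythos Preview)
Your proposal is correct and follows the same approach as the paper: the paper's proof is a one-liner stating that the lemma is ``an immediate consequence of Lemma~\ref{lemma:calandriello_inducing_set} with the given choice of parameter $p_0$,'' and your argument simply unpacks this consequence by conditioning on $\cD_j$, exhibiting the cancellation of $\sigma_{j,\max}^2$ in the exponent, and taking the union bound over epochs. Your additional care about the data-dependence of $p_j$ and the discrepancy between the two stated values of $p_0$ is well-placed but does not constitute a different route.
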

\begin{proof}
    The statement is an immediate consequence of Lemma~\ref{lemma:calandriello_inducing_set} with the given choice of parameter $p_0$.
\end{proof}

We are now ready to prove Lemmas~\ref{grid_bound} and~\ref{lemma:epoch_number}.

\subsection{Proof of Lemma~\ref{grid_bound}}
\label{grid_bound_proof}
\begin{proof}

Consider any epoch $j \geq 2$ and let $x \in \cX_j$. Let $\Delta(x) := f(x^*) - f(x)$. We will obtain a bound on $\Delta(x)$ for any general $x$ in order establish the bound on $\Delta_j$. Using the discretization from Assumption~\ref{Grid} for $\cX_{j}$, we obtain,
\begin{align*}
    \Delta(x) & = f(x^*) - f(x) \\
    & \leq f(x^*) - f([x^*]_{\cU_T}) +  f([x^*]_{\cU_T}) - (f(x) - f([x]_{\cU_T})) -  f([x]_{\cU_T}) \\
    & \leq  f([x^*]_{\cU_T}) -  f([x]_{\cU_T}) + \frac{2B}{T}.
\end{align*}
Using the result from~\citet{Vakili_Suddep_Sketching}, we obtain the following high probability bound that holds with probability $1- \delta$:
\begin{align*}
    \Delta(x) & \leq  f([x^*]_{\cU_T}) -  f([x]_{\cU_T}) + \frac{2B}{T} \\
    & \leq  \tilde{\mu}_j([x^*]_{\cU_T}) + \beta(\delta') \tilde{\sigma}_j([x^*]_{\cU_T}) -  \tilde{\mu}_j([x]_{\cU_T}) + \beta(\delta') \tilde{\sigma}_j([x]_{\cU_T}) + \frac{2B}{T} \\
    & \leq  \tilde{\mu}_j(x^*) -  \tilde{\mu}_j(x) + \beta(\delta') \tilde{\sigma}_j([x^*]_{\cU_T}) + \beta(\delta') \tilde{\sigma}_j([x]_{\cU_T}) + \frac{4B}{T},
\end{align*}
where we again used Assumption~\ref{Grid} in the last step. We claim that $x^* \in \cX_{j-1}$ for all $j \geq 2$. Assuming this claim this true, we can bound the above expression as
\begin{align*}
    \Delta(x) & \leq  \sup_{x \in \cX_{j-1}} \tilde{\mu}_j(x') -  \tilde{\mu}_j(x) + \beta(\delta') \tilde{\sigma}_j([x^*]_{\cU_T}) + \beta(\delta') \tilde{\sigma}_j([x]_{\cU_T}) + \frac{4B}{T} \\
    & \leq  2 \beta(\delta') \sigma_{j, \max}  + \beta(\delta') \tilde{\sigma}_j([x^*]_{\cU_T}) + \beta(\delta') \tilde{\sigma}_j([x]_{\cU_T}) + \frac{4B}{T},
\end{align*}
where we used the update condition (Eqn.~\eqref{eqn:update_rule}) in the second step. Since $\cS_j$ is $0.5$-accurate (Lemma~\ref{lemma:accuracy_of_S_j}), we have $\tilde{\sigma}_j^2(x) \leq 3 \sigma_j^2(x) \leq 3\sigma_{j, \max}^2$. On plugging this back into the above equation, we obtain,
\begin{align*}
    \Delta(x) & \leq  8 \beta(\delta') \sigma_{j, \max}  + \frac{4B}{T}.
\end{align*}
The statement of the lemma follows by $\Delta_j = \sup_{x \in \cX_j} \Delta(x)$. \\

We prove our claim $x^* \in \cX_{j}$ for all $j \geq 1$ using induction. Clearly, $x^* \in \cX_1 = \cX$, by definition. Assume $x^* \in \cX_{j-1}$ for some $j \geq 2$. Fix an arbitrary  $ x \in \mathcal{X}_{j-1}$, from the confidence bound lemma we have:
\begin{align*}
 \mu_{j-1}(x)-\mu_{j-1}(x^*)\leq &  (f(x)-f(x^*))+\beta(\delta^{'})(\sigma_j(x)+\sigma_j(x^*)) \leq  2\sigma_{j-1.\mathrm{max}}(x),
\end{align*}
where the second inequality follows as $f(x)\leq f(x^*)$. As the inequality holds $\forall x \in \mathcal{X}_{j-1}$ we must have:
\begin{align*}
 \sup_{x \in \mathcal{X}_{j-1}}\mu_{j-1}(x)-\mu_{j-1}(x^*)\leq 2\sigma_{j-1.\mathrm{max}}(x)
\end{align*}
and thus indeed $x\in \mathcal{X}_j$.

\end{proof}

\subsection{Proof of Lemma~\ref{lemma:epoch_number}}
\label{proof:epoch_number}

We define $E(s) := \min\{j : T_j \geq T/4 \ | \ T_1 = s \}$. Note that $T_j$ is an increasing function of $j$. Since $T_{E(s)} \geq T/4$, we can conclude that $E(s) + 4$ is an upper bound on the number of epochs. Thus, we focus on bounding $E(s)$. We first show that $E(s)$ is a non-decreasing function of $s$. \\

To that effect, we claim that for $j \geq 2$ the epoch lengths satisfy the relation $T_j \geq T^{1-2^{-j + 1}} \cdot  T_1^{2^{-j+1}}$. This relation follows immediately using induction. For the base case, note that $T_2 \geq T^{1/2} \cdot T_1^{1/2}$, by definition. Assume that the relation holds for $j - 1$. Thus, 
\begin{align}
    T_j \geq T^{1/2} \cdot T_{j-1}^{1/2} \geq T^{1/2} \cdot T^{1-2^{-(j-1) + 1 - 1} } \cdot T_1^{2^{-(j-1) + 1 - 1}} \geq T^{1-2^{-j + 1}} \cdot  T_1^{2^{-j+1}}.
    \label{eqn:T_j_bound}
\end{align}
Since $T_j$'s are lower bounded by an increasing function of $T_1$, the number of epochs $E(s)$ is a non-increasing function of $s$. Since $T_1 \geq \frac{T}{N}$, $E\left(\frac{T}{N}\right)$ is an upper bound on the number of epochs for all choices of $T_1$. \\

Let $j^* = \max\{\log(\log(T)), \log(\log(N))\}$.  Using the above relation for $T_j$ from Eqn.~\eqref{eqn:T_j_bound} and the lower bound on $T_1$, we have,
\begin{align*}
    T_{j^*} \geq T \cdot{N^{-2^{1-j}}}= T\cdot\left(2^{-\frac{\log{N}}{2^{j}}}\right)^2\geq T\cdot2^{-2}
\end{align*}
We can  hence conclude that $T_{j^*} \geq T/4$, which implies that $E(T_1) \leq j^*$ for all permissible choices of $T_1$. Consequently, the number of epochs are bounded as $\log(\log(\max\{N, T\})) + 4$.

\subsection{Proof of Lemma~\ref{lemma:inducing_set_size}}

For all epochs $j \geq 1$, recall that the inducing set is constructed by including each point from $\cD_j$ with probability $p_j$, independent of other points. Thus, $|\cS_j|$ is a binomial random variable with parameters $|\cD_j| = NT_j$ and $p_j$. Using the Chernoff bound for Binomial random variables, we can conclude that 
\begin{align*}
    \Pr(|\mathcal{S}_j|>(1+\varepsilon) NT_j p_j)\leq \exp\left(-\frac{\varepsilon^2 NT_j p_j}{2+\varepsilon}\right).
\end{align*}
Invoking the bound with $\varepsilon = 2+\log(1/\delta')$, with $\delta' = \delta/(\log\log(NT) + 4)$ yields that the following relation holds with probability $1 - \delta'$:
\begin{align*}
    |\cS_j| & \leq (3 + \log(1/\delta')) \cdot NT_j p_j \\
    & \leq (3 + \log(1/\delta')) \cdot NT_j \cdot p_0 \sigma_{j, \max}^2 \\
    & \leq (3 + \log(1/\delta')) \cdot NT_j p_0 \cdot C_{f, \cX} \cdot \frac{\gamma_{NT_j}}{NT_j}  \\
    & \leq (3 + \log(1/\delta')) p_0 \gamma_{NT},
\end{align*}
where we used Lemma~\ref{max_var_bound} in the third step and monotonicity of $\gamma_m$ in the last step. On taking a union bound over all epochs and using the bound on the number of epochs from Lemma~\ref{lemma:epoch_number}, we conclude that for all epochs $j$, $|\cS_j| = \tilde{\cO}(\gamma_{NT})$ with probability $1 - \delta$.



\end{document}